\newcommand{\bx}{\mathbf{x}}
\newcommand{\bz}{\mathbf{z}}
\newcommand{\bw}{\mathbf{w}}
\newcommand{\bu}{\mathbf{u}}
\newcommand{\bW}{\mathbf{W}}
\newcommand{\bA}{\mathbf{A}}
\newcommand{\bB}{\mathbf{B}}
\newcommand{\bX}{\mathbf{X}}
\newcommand{\bI}{\mathbf{I}}
\newcommand{\bK}{\mathbf{K}}
\newcommand{\bzero}{\mathbf{0}}
\newcommand{\bone}{\mathbf{1}}
\newcommand{\balpha}{\bm{\alpha}}
\newcommand{\bbeta}{\bm{\beta}}
\newcommand*{\thead}[1]{\multicolumn{1}{c}{\bfseries #1}}
\newtheorem{prop}{Proposition}
\title{Kernel Implicit Variational Inference}
\author{Jiaxin Shi\thanks{These authors contribute equally; J.Z is the corresponding author.}~~$^\dag$, \quad  Shengyang Sun\footnotemark[1]~~$^\ddag$, \quad Jun Zhu$^\dag$\\
	\scalebox{0.93}{$^\dag$Department of Computer Science \& Technology, THU Lab for Brain and AI, Tsinghua University} \\
	\scalebox{0.93}{$^\ddag$Department of Computer Science, University of Toronto} \\
	\scalebox{0.93}{\texttt{shijx15@mails.tsinghua.edu.cn}, ~\texttt{ssy@cs.toronto.edu},~\texttt{dcszj@tsinghua.edu.cn}}
%
}
\begin{document}

	\maketitle
	
	\begin{abstract}
		Recent progress in variational inference has paid much attention to the flexibility of variational posteriors. One promising direction is to use implicit distributions, i.e., distributions without tractable densities as the variational posterior. However, existing methods on implicit posteriors still face challenges of noisy estimation and
computational infeasibility when applied to models with high-dimensional latent variables.
		In this paper, we present a new approach named \emph{Kernel Implicit Variational Inference} that addresses these challenges. As far as we know, for the first time implicit variational inference is successfully applied to Bayesian neural networks, which shows promising results on both regression and classification tasks.
	\end{abstract}
	
	\section{Introduction}
	\label{intro}
	
	
	
	Bayesian methods have been playing vital roles in machine learning by providing a principled approach for generative modeling, posterior inference and preventing over-fitting~\citep{ghahramani2015probabilistic}. As it becomes a common practice to build deep models that have many parameters~\citep{lecun2015deep}, it is even more important to have a Bayesian formulation to
		capture the uncertainty in these models. For example, Bayesian Neural Networks (BNNs)~\citep{neal2012bayesian,blundell2015weight} have shown promise in 
	reasoning about model confidence and learning with few labeled data. Another recent trend is to incorporate deep neural networks as a powerful function mapping between random variables in a Bayesian network, such as deep generative models like variational autoencoders (VAE)~\citep{kingma2013auto}. 
	
	
	Except a few simple examples, Bayesian inference is typically challenging, for which variational inference (VI) has been a standard workhorse to approximate the true posterior~\citep{BigBayes2017nsr}. Traditional VI focuses on factorized variational posteriors to get analytical updates (known as Mean-field VI).
	While recent progress in this field drives VI into stochastic, differentiable and amortized \citep{hoffman2013stochastic,paisley2012variational,MnihGregor2014,kingma2013auto}, which does not rely on analytical updates anymore, factorized posteriors are still commonly used as the variational family. This greatly restricts the flexibility of the variational posterior, especially in high-dimensional spaces, which often leads to 
	biased solutions as the true posterior is usually not factorized, thus not in the family. There have been some works that try to improve the flexibility of variational posteriors, borrowing ideas from invertible transformation of probability distributions \citep{rezende2015variational, kingma2016improved}. In their works, it is important for the transformation to be invertible to ensure that the transformed distribution has a tractable density.
	
	Although utilizing invertible transformation is a promising direction to increase the expressiveness of the variational posterior, we argue that a more flexible variational family can be constructed by using general deterministic or stochastic transformations, which are not necessarily invertible. As a common result, the variational posterior we get in this way does not have a tractable density, despite that there is a way to sample from it. This kind of distribution is called implicit distributions, and for variational methods that use an implicit variational posterior (also known as variational programs \citep{ranganath2016operator} or wild variational approximations \citep{liu2016two}), we refer to them as \textit{Implicit Variational Inference} (implicit VI). {Most of the existing implicit VI methods~\citep{mescheder2017adversarial,huszar2017variational,tran2017deep} rely on a discriminator to produce estimates of the variational objective and its gradients. As pointed out by many of them, the estimates are often noisy and can lead to unstable training. Besides, discriminator-based approaches are computationally infeasible when applied to nontrivial BNNs.}
	
	
	In this paper we present an approach named \emph{Kernel Implicit Variational Inference} (KIVI), which addresses the noisy estimation 
	problem in previous works {by providing a principled way of tuning the bias-variance tradeoff. Furthermore, KIVI does not rely on a discriminator and thus is computationally feasible for models with high-dimensional latent variables (e.g., BNNs).}
	KIVI is applicable to both global and local latent variable models, which is demonstrated by experiments on BNNs and VAEs.
	As far as we know, this is the first time that implicit VI is successfully applied to BNNs, which shows promising results on both regression and classification tasks.
	
	
	\section{Background}
	\label{sec:bg}
	
	Consider a generative model $p(\bz, \bx)=p(\bz)p(\bx|\bz)$, where $\bx$ and $\bz$ denote observed and latent variables, respectively.
	In VI, a variational distribution $q_{\phi}(\bz)$ in some parametric family is chosen to approximate the true posterior $p(\bz|\bx)$ by optimizing the \textit{evidence lower bound} (ELBO):
	\begin{equation}
	\label{eq:elbo}
	\mathcal{L}(\bx; \phi) = \mathbb{E}_{q_{\phi}(\bz)}\left[\log p(\bx|\bz)\right] - \mathrm{KL}(q_{\phi}(\bz)\|p(\bz)),
	\end{equation}
	where $\mathrm{KL}$ denotes the Kullback-Leibler divergence $\mathrm{KL}(q\|p) = \mathbb{E}_{q} [\log \frac{q}{p}]$. This objective is a lower bound of the log-likelihood $\log p(\bx)$ since it can be written as
	$\mathcal{L}(\bx; \phi) = \log p(\bx) - \mathrm{KL}(q_{\phi}(\bz)\|p(\bz|\bx)).$
	The maximum of this objective is achieved when $q_{\phi}(\bz) = p(\bz|\bx)$. From Eq.~(\ref{eq:elbo}), we can see that the challenge of using an implicit $q_{\phi}$ is that calculating $\mathrm{KL}(q_{\phi}(\bz)\|p(\bz))$ requires evaluating the density of $q_{\phi}$, which is intractable for an implicit distribution.
	
	Recently, inspired by the probabilistic interpretation of Generative Adversarial Networks (GAN) \citep{goodfellow2014generative, mohamed2016learning}, there have been some works that extend the GAN approach to the posterior inference of latent variable models (LVMs) \citep{mescheder2017adversarial, huszar2017variational, tran2017deep}. These methods all use an implicit variational family and thus can be categorized into implicit VI methods. One of their key observations is that the density ratio $\frac{q_{\phi}(\bz)}{p(\bz)}$ can be estimated from samples of the two distributions by a probabilistic classifier called the discriminator. They first assign class labels ($y$) to $q$ and $p$: Let samples from $q_{\phi}(\bz)$ be of class $y=1$, and samples from $p(\bz)$ be of class $y=0$. Given an equal class prior, the density ratio at a given point can be calculated as $q_{\phi}(\bz)/p(\bz) = p(\bz|y=1)/p(\bz|y=0) = p(y=1|\bz)/p(y=0|\bz), $
	which is the ratio between the class probabilities given the data point. To estimate this, a discriminator $D$ is trained to classify between the two classes, with a logistic loss:
	\begin{eqnarray}
	\label{eq:adv}
	\max_D{\mathbb{E}_{q_{\phi}(\bz)} \left[ \log \left(D(\bz)\right) \right] + \mathbb{E}_{p(\bz)} \left[ \log \left(1-D(\bz)\right) \right] } ,
	\end{eqnarray}
	where $D(\bz)$ outputs the probability of $\bz$'s being from class $y=1$. Given that $D$ is sufficiently flexible, the optimal solution of Eq.~(\ref{eq:adv}) is $D(\bz) = q_{\phi}(\bz)/(q_{\phi}(\bz) + p(\bz)).$
	Therefore, the KL divergence term in the ELBO of Eq.~(\ref{eq:elbo}) can be approximated as $\mathrm{KL}(q_{\phi}\|p) \approx \mathbb{E}_{q_{\phi}(\bz)}\left[\log D(\bz) - \log (1 - D(\bz))\right].$
	This is called prior-contrastive forms of VI in \citet{huszar2017variational}.
	Note that the ratio approximation does not change the gradients once the approximation is accurate, as we shall see later in Eq.~(\ref{eq:grad-kl}).
	Though incorporating the discriminative power in a probabilistic model has shown great success in GANs, this method still suffers from challenging problems when applied to VI:
	\begin{itemize}[leftmargin=*]
		\item \textbf{Noisy density ratio estimation (DRE)}\; In VI, the variational posterior gets updated in each iteration. As shown in Eq.~\eqref{eq:adv}, the discriminator should be trained to optimum after each update. However, in practice the inner loop for training the discriminator is often truncated to one or several iterations. At the beginning of the inference procedure, it is hard for the discriminator to catch up with the variational posterior. The noisy signal produced by the discriminator leads to noisy gradients and thus unstable training. Besides, even if the discriminator quickly achieves the optimum in a small number of iterations, there is still another issue. Notice that the training loss in Eq.~(\ref{eq:adv}) is with expectations. But in practice we are using samples from the two distributions to approximate it. When the support of the distributions is high-dimensional, given the limited number of samples we use, the variance of this estimate is considerable, i.e., the discriminator tends to overfit the samples. The phenomenon is that the discriminator arrives at a state where samples are easily distinguished and the probabilities given by the discriminator are near 0 or 1, which is commonly observed in experiments \citep{mescheder2017adversarial}.
		\item \textbf{Computationally infeasible for high dimensional latent variables}\; As the density ratio is estimated by a discriminator, the samples from the two distributions of latent variables should be fed into it. However, the typically used neural network discriminator cannot afford very high-dimensional inputs (e.g., parameters in a moderate-size Bayesian neural network).
	\end{itemize}
	
	\vspace{-.15cm}
	\section{Kernel Implicit Variational Inference}
	\label{kivi}
	\vspace{-.15cm}
	
	To address the above challenges for implicit VI, we propose to replace the discriminator with a kernel method for DRE.
	The advantages of this method are that it has a closed-form solution and that it allows us to explicitly tradeoff between bias and variance by tuning a regularization coefficient.

	\subsection{Estimating the KL Term}
	\label{sec:est}
	
	Specifically, let $\bz\in \mathbb{R}^d$ be the latent variable, and the true density ratio is
	$r(\bz) = q_{\phi}(\bz)/p(\bz)$.
	Consider modeling it with a function $\hat{r}\in \mathcal{H}$, where $\mathcal{H}$ is a \emph{Reproducing Kernel Hilbert Space} (RKHS) induced by a positive definite kernel $k(\bz, \bz'): \mathbb{R}^d\times \mathbb{R}^d\to \mathbb{R}$.
	Similar to kernel ridge regression, we use an objective composed of a squared loss for regression plus a penalty for the complexity of the function. For the squared loss we choose the form used by the unconstrained Least Square Importance Fitting (uLSIF)~\citep{kanamori2009least}:
	\begin{equation} \label{eq:squared}
	\mathcal{J}(\hat{r}) = \frac{1}{2}\int (\hat{r}(\bz) - r(\bz))^2p(\bz)\;d\bz = \frac{1}{2}\;\mathbb{E}_p \hat{r}(\bz)^2 - \mathbb{E}_q \hat{r}(\bz) + C,
	\end{equation}
	where $C$ is a constant, and approximate the expectation in $\mathcal{J}(\hat{r})$ by Monte Carlo estimates:
	\begin{equation*}
		\hat{\mathcal{J}}(\hat{r}) = \frac{1}{2n_p}\sum_{i=1}^{n_p}\hat{r}(\bz^p_i)^2 - \frac{1}{n_q}\sum_{j=1}^{n_q}\hat{r}(\bz^q_j) + C,\quad \bz^p_i\sim p(\bz),\;\bz^q_j\sim q_{\phi}(\bz),
	\end{equation*}
	where $n_p$ and $n_q$ are the number of samples from $p$ and $q$, respectively.
	Note that in Eq.~(\ref{eq:squared}), the expectation of the squared loss is taken w.r.t. $p$ so that the resulting form can be estimated without evaluating the density of both distributions.
	For the penalty term, the complexity of $\hat{r}$ is measured by its RKHS norm ($\|\hat{r}\|^2_{\mathcal{H}}$). Putting them together, we get the final objective:
		\begin{equation} \label{eq:obj}
		\min_{\hat{r}\in\mathcal{H}}\; \hat{\mathcal{J}}(\hat{r}) + \frac{\lambda}{2}\|\hat{r}\|^2_{\mathcal{H}}.
		\end{equation}
	Here $\lambda$ is the regularization coefficient.
	\begin{prop}
		The optimal solution of Eq.~(\ref{eq:obj}) lies in the linear subspace spanned by the kernel functions centered at the samples ($\{\bz^p_i\}_{i=1}^{n_p}, \{\bz^q_j\}_{j=1}^{n_q}$), i.e., $\hat{r}$ has the form:
		\begin{equation} \label{eq:rkhs-r}
		\hat{r} = \sum_{i=1}^{n_p}\alpha_ik(\bz^p_i, \cdot) + \sum_{j=1}^{n_q}\beta_jk(\bz^q_j, \cdot).
		\end{equation}\vspace{-.6cm}
	\end{prop}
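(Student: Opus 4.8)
The plan is to invoke the classical representer-theorem argument: decompose an arbitrary candidate $\hat{r}\in\mathcal{H}$ into a component lying in the finite-dimensional subspace spanned by the sample-centered kernel functions and a component orthogonal to it, then show that discarding the orthogonal part can only decrease (and never increase) the objective. Concretely, I would set $S=\mathrm{span}\{k(\bz^p_i,\cdot)\}_{i=1}^{n_p}\cup\{k(\bz^q_j,\cdot)\}_{j=1}^{n_q}$. Since $S$ is finite-dimensional it is a closed subspace of $\mathcal{H}$, so every $\hat{r}$ admits a unique orthogonal decomposition $\hat{r}=\hat{r}_\parallel+\hat{r}_\perp$ with $\hat{r}_\parallel\in S$ and $\hat{r}_\perp\perp S$.

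The key step is to observe that the empirical loss $\hat{\mathcal{J}}$ depends on $\hat{r}$ only through its values at the sample points. By the reproducing property, for any sample $\bz_\ast\in\{\bz^p_i\}\cup\{\bz^q_j\}$ we have $\hat{r}(\bz_\ast)=\langle\hat{r},k(\bz_\ast,\cdot)\rangle_{\mathcal{H}}$; because $k(\bz_\ast,\cdot)\in S$ while $\hat{r}_\perp\perp S$, this collapses to $\langle\hat{r}_\parallel,k(\bz_\ast,\cdot)\rangle_{\mathcal{H}}=\hat{r}_\parallel(\bz_\ast)$. Hence $\hat{\mathcal{J}}(\hat{r})=\hat{\mathcal{J}}(\hat{r}_\parallel)$: both the quadratic term $\tfrac{1}{2n_p}\sum_i\hat{r}(\bz^p_i)^2$ and the linear term $\tfrac{1}{n_q}\sum_j\hat{r}(\bz^q_j)$ are pointwise evaluations, so the orthogonal component is invisible to the loss.

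For the penalty term, Pythagoras in $\mathcal{H}$ gives $\|\hat{r}\|_{\mathcal{H}}^2=\|\hat{r}_\parallel\|_{\mathcal{H}}^2+\|\hat{r}_\perp\|_{\mathcal{H}}^2\ge\|\hat{r}_\parallel\|_{\mathcal{H}}^2$, with equality if and only if $\hat{r}_\perp=0$. Combining the two facts, the full objective satisfies $\hat{\mathcal{J}}(\hat{r})+\tfrac{\lambda}{2}\|\hat{r}\|_{\mathcal{H}}^2\ge\hat{\mathcal{J}}(\hat{r}_\parallel)+\tfrac{\lambda}{2}\|\hat{r}_\parallel\|_{\mathcal{H}}^2$, and the inequality is strict whenever $\hat{r}_\perp\neq0$ (for $\lambda>0$). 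Therefore replacing any candidate by its projection onto $S$ never worsens the objective, so every minimizer must have $\hat{r}_\perp=0$, i.e.\ lie in $S$. Writing out the coordinates of $\hat{r}_\parallel$ in the spanning set yields exactly the form in Eq.~(\ref{eq:rkhs-r}), with $\alpha_i,\beta_j$ as those coordinates.

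I do not anticipate a genuine obstacle, as this is the standard representer-theorem pattern; the one point that warrants care is verifying that $\hat{\mathcal{J}}$ really is a function of pointwise evaluations alone, so that it is blind to $\hat{r}_\perp$. This is immediate here because the uLSIF-based empirical loss in Eq.~(\ref{eq:obj}) contains only terms of the form $\hat{r}(\bz^p_i)$ and $\hat{r}(\bz^q_j)$, but it is worth stating explicitly since it is precisely the property that makes the orthogonal-decomposition argument go through.
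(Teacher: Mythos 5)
Your proposal is correct and follows essentially the same route as the paper's own proof in Appendix~A: orthogonal decomposition $\hat{r}=\hat{r}_\parallel+\hat{r}_\perp$ with respect to the span of the sample-centered representers, invariance of the empirical loss under $\hat{r}_\perp$ via the reproducing property, and the Pythagorean identity for the RKHS-norm penalty forcing $\hat{r}_\perp=0$ at the optimum. Your additional remarks (strictness of the inequality for $\lambda>0$ and the explicit check that $\hat{\mathcal{J}}$ depends only on pointwise evaluations) are sound refinements of the same argument, not a different one.
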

	\begin{proof}
		This can be seen as the generalization of the representer theorem \citep{scholkopf2001generalized} to the density ratio problem. So the proof follows the same procedure. See Appendix~\ref{app:rep}.
	\end{proof}
	Substituting Eq.~(\ref{eq:rkhs-r}) into Eq.~(\ref{eq:obj}) and setting the derivatives w.r.t. $\balpha$ and $\bbeta$ to zeros, we get the optimal solution (a detailed derivation is given in Appendix~\ref{app:closed}):
	\begin{equation} \label{eq:optimal}
	\bbeta = \frac{1}{\lambda n_q}\mathbf{1},\quad \balpha = -\frac{1}{\lambda n_pn_q}\left( \frac{1}{n_p}\bK_p + \lambda\bI \right)^{-1}\bK_{pq}\mathbf{1},
	\end{equation}
	where $(\bK_p)_{i,j} = k(\bz^p_i, \bz^p_j)$ and $(\bK_{pq})_{i,j} = k(\bz^p_i, \bz^q_j)$. We use the common RBF kernels $k(\bz, \bz') = \exp\left(-\|\bz - \bz'\|^2_2/2\sigma^2\right)$. $\sigma$ is the kernel bandwidth, which is determined by the commonly used median heuristic (the median of pairwise distances of the sample points).
	
	Once we have the approximate density ratio function $\hat{r}$, a Monte Carlo estimate of $\mathrm{KL}(q_{\phi}(\bz)\|p(\bz))$ can be constructed by $\frac{1}{n_q}\sum_{i=1}^{n_q}\log \hat{r}(\bz^q_i)$.
	Note that there is a constraint that the estimated density ratio should be non-negative. However, we do not involve it in the optimization objective in order to get a closed-form solution, which indicates that some post-processing is needed to ensure this property. We solve the issue by clipping the estimated density ratio. The clipping values are searched from $\{10^{-8}, 10^{-16}, 10^{-32}\}$.
	In experiments we found that the algorithm is not sensitive to the clipping value. This is due to the accurate estimation guaranteed by the global optimum in the RKHS, which is a universal family when RBF kernels are used \citep{carmeli2010vector}.
	
	
	
	\textbf{The reverse ratio trick}\; Another technique is essential to improve the estimation of the KL term, which we call the reverse ratio trick.
	The key observation is that the expectation in the squared loss $\mathcal{J}(\hat{r})$ in Eq.~(\ref{eq:squared}) is taken w.r.t. $p$,
	whereas the expectation in the KL term ($\mathrm{KL}(q\|p)$) is taken w.r.t. $q$. Unless $p$ and $q$ match very well in where they put most probabilities,
	a small squared loss does not always mean a good KL estimate.
	The solution is by a simple trick. Instead of estimating $\frac{q}{p}$, we choose to estimate $r_{pq} = \frac{p}{q}$ and compute the KL term as $-\mathbb{E}_{q} \log \frac{p}{q}$. We denote the estimated reverse density ratio as $\hat{r}_{pq}$, then the corresponding KL estimate is $-\mathbb{E}_q \log \hat{r}_{pq}$. Note that in this way the squared loss changes to $\mathcal{J}(\hat{r}_{pq}) = \frac{1}{2}\int (\hat{r}_{pq}(\bz) - r_{pq}(\bz))^2q_{\phi}(\bz)\;d\bz$,
	whose expectation is taken w.r.t. the same probability measure ($q$) as the KL term's. As we shall see in experiments (Appendix~\ref{app:blr}), the trick is essential to make the estimation sufficiently accurate for VI.
	
	\textbf{Gradient computation}\; We now consider how to estimate the gradients of the KL term w.r.t. variational parameters $\phi$. First it is easy to prove as in \citet{huszar2017variational} that
	\begin{equation} \label{eq:grad-kl}
	\nabla_{\phi} \mathrm{KL}(q_{\phi}\|p) = -\nabla_{\phi}\mathbb{E}_{q_{\phi}} \log \frac{p}{q_{\phi}} = -\nabla_{\phi}\mathbb{E}_{q_{\phi}} \log \frac{p}{q}.
	\end{equation}
	A detailed proof is in Appendix~\ref{app:grad-kl}. Eq.~(\ref{eq:grad-kl}) indicates that the true gradients of the KL term w.r.t. $\phi$ do not flow through the density ratio function.
	We now replace the ratio on the right side with $\hat{r}_{pq}$:
	$	\nabla_{\phi} \mathrm{KL}(q_{\phi}\|p) \approx -\nabla_{\phi}\mathbb{E}_{q_{\phi}}\log \hat{r}_{pq}$. Note that without Eq.~(\ref{eq:grad-kl}) we cannot do the approximation since $\hat{r}_{pq}$ has zero gradients w.r.t. $\phi$.
	Then, the reparameterization trick \citep{kingma2013auto} can be used:
	\begin{equation*} \vspace{-.1cm}
	-\nabla_{\phi}\mathbb{E}_{q_{\phi}}\log \hat{r}_{pq} = -\mathbb{E}_{\epsilon\sim N(0, I)} \nabla_{\phi} \log \hat{r}_{pq}(\bz^q(\epsilon; \phi)).
	\end{equation*}
	
	\subsection{The Algorithm}
	\label{sec:algo}
	
	We have constructed a closed-form estimate for the KL term and show its gradients can be estimated by the reparameterization trick. Note that the reparameterization trick can also be used to compute the gradients of the reconstruction term in Eq.~(\ref{eq:elbo}) and thus can be applied to the ELBO.
	See Algo.~\ref{algo:KIVI} for the complete algorithm. Note that the number of samples ($M$) used in the reconstruction term can be different from that required for the KL estimation, which can be reduced when the model is expensive (e.g., we set $M=1$ in the experiments of VAEs). Thus compared to normal reparameterized VI, the extra computational cost is mainly in calculating the inverse of the $n_p\times n_p$ matrix in Eq.~(\ref{eq:optimal}). As we shall see in experiments, tens or a hundred samples are sufficient to obtain a stable KL estimate, so the added cost is not high.
	
			{\small	\vspace{-.1cm}\begin{algorithm}[H]
					\centering
					\caption{Kernel Implicit Variational Inference (KIVI)\label{algo:KIVI}}
					\begin{algorithmic}[1]
						\REQUIRE Observed data $\bx$, model $p_{\theta}(\bx|\bz)p(\bz)$.
						\REQUIRE Implicit variational posterior $q_{\phi}(\bz|\bx)$, $n_p$, $n_q$, $M$.
						\REPEAT
						\STATE Sample from the prior: $\bz_i^p \sim p(\bz), \; i=1, \dots, n_p$.
						\STATE Sample from the variational posterior: $\bz_j^q \sim q_{\phi}(\bz|\bx), \; j=1, \dots, n_q$.
						\STATE Compute the density ratio $\hat{r}_{pq}$ by Eq.~(\ref{eq:rkhs-r}), (\ref{eq:optimal}) and clip $\hat{r}_{pq}$ to be positive at $\bz^q$s.
						\STATE Compute $\hat{\mathrm{KL}} = -\frac{1}{n_q}\sum_{j=1}^{n_q} \log \hat{r}_{pq}(\bz_j^q)$ and $\hat{\mathcal{L}} = \frac{1}{M}\sum_{m=1}^{M} \log p(\bx|\bz_m^q) - \hat{\mathrm{KL}}$.
						\STATE Estimate $\nabla_{\phi}\mathcal{L}$ with the reparameterization trick.
						\STATE Do gradient ascent with $\nabla_{\phi}\mathcal{L}$.
						\STATE (Optional) For parameter learning, do gradient ascent with $\nabla_{\theta}\mathcal{L}$.
						\UNTIL{Convergence}
					\end{algorithmic}
				\end{algorithm}
			\vspace{-.3cm}
		}
	KIVI addresses the two challenges stated in Sec.~\ref{sec:bg}. First, the ratio estimates are given in closed-forms, thus not having the problem of not catching up. Second, the bias-variance trade-off of the estimation can be controlled by the regularization coefficient $\lambda$. When $\lambda$ is set smaller, the estimation is more aggressive to match the samples. When $\lambda$ is set larger, the estimated ratio function is smoother. Choosing an appropriate $\lambda$, the variance of the gradients can be controlled, compared to the extreme ratio estimates given by discriminators when their output probabilities are near 0 or 1. Moreover, KIVI is directly applicable to both global and local latent variable models (LVMs), which is an advantage over nonparametric VI methods like particle mirror descent \citep{dai2015provable} and Stein variational gradient descent (SVGD) \citep{liu2016stein}. For the task of training local LVMs like VAEs, we additionally use the adaptive contrast (AC) technique \citep{mescheder2017adversarial}, whose details are summarized in Appendix~\ref{app:ac}.
	
	\section{Example: Implicit Variational Bayesian Neural Networks}\label{sec:ivbnn}
	Now we present an example for using KIVI in Bayesian neural networks (BNNs), which have received increasing attention due to their ability to model uncertainty, an important factor in many tasks such as adversarial defense and reinforcement learning.
	However, despite that we have removed the need for a discriminator, it is still nontrivial to apply KIVI to BNNs because we need to design an implicit posterior that outputs very high-dimensional samples of latent variables (weights). Existing implicit posteriors \citep{mescheder2017adversarial,song2017nice} based on traditional fully-connected neural networks cannot handle such a high-dimensional output space.
	We present \emph{Matrix Multiplication Neural Network} (MMNN), an efficient architecture for sampling large matrices. Deploying MMNN, KIVI can easily scale up to large BNNs.
	
	In BNNs, a prior is specified over the neural network parameters $\bW = \{\bW_l\}_{l=1}^L$, where $\bW_l$ indicates weights in the $l$-th layer. Given an input $\bx$, the output $y$ is modeled with
	\begin{equation*}
	\bW \sim N(\mathbf{0}, \bI),\quad \hat{y} = f_{\mathrm{NN}}(\bx, \bW),\quad y \sim \mathcal{P}(\hat{y};\theta),
	\end{equation*}
	where $\hat{y}$ is the output of the feed-forward network $f_{\mathrm{NN}}$, and $y$ is of a distribution $\mathcal{P}$ parameterized by $\hat{y}$ and $\theta$. For regression, $\mathcal{P}$ is usually a Gaussian with $\hat{y}$ as the mean. For classification, $\mathcal{P}$ is usually a discrete distribution with $\hat{y}$ as the unnormalized log probabilities.
	
	The true posterior of $\bW$ in BNNs is intractable. Thus we turn to VI and use a variational posterior $q$ to approximate it. Denoting the dataset with $\mathbf{X}=\{\bx_i\}_{i=1}^N, \mathbf{Y}=\{y_i\}_{i=1}^N$, we have the ELBO:
	$$
	\mathcal{L}(\mathbf{Y}, \mathbf{X}; \phi) = \mathbb{E}_{q_{\phi}(\bW)} \log p(\mathbf{Y}|\mathbf{X}, \bW) - \mathrm{KL}(q_{\phi}(\bW)\|p(\bW)).
	$$
	The variational posterior is usually set to be factorized by layer: $q_{\phi}(\bW) = \prod_{l=1}^L {q_{\phi_l} (\bW_l)}.$
	However, previous methods used variational posteriors with a limited capacity for each $q_{\phi_l} (\bW_l)$, including factorized Gaussian~\citep{hernandez2015probabilistic}, matrix variate Gaussian~\citep{louizos2016structured, sun2017learning} and normalizing flows ~\citep{louizos2017multiplicative}. Enabled to learn implicit variational posteriors, we propose to adopt a general distribution without an explicit density function, which has a form of
	\begin{equation*}
	\bW^0_l \sim N(\mathbf{0}, \mathbf{I}),\quad \bW_l^q = g_{\phi_l} (\bW^0_l).
	\end{equation*}
	
	\begin{wrapfigure}{r}{0.56\textwidth}\vspace{-.8cm}
		\centering
		\begin{minipage}{0.56\textwidth}
			{\small \begin{algorithm}[H]
					\centering
					\caption{MMNN\label{algo:MMNN}}
					\begin{algorithmic}[1]
						\REQUIRE Input matrix $\bX_0$
						\REQUIRE Network parameters $\{\bA_i^l, \bB_i^l, \bA_i^r, \bB_i^r\}_{i=1}^L$
						\FOR {$i = 1, \dots, L$}
						\STATE Left multiplication: $\bX_i=\bA_i^l \bX_{i-1} + \bB_i^l$
						\STATE Right multiplication: $\bX_i=\bX_i \bA_i^r + \bB_i^r$
						\IF{$i \leq L-1$}
						\STATE $\bX_i = \mathrm{Relu}\left(\bX_i\right)$
						\ENDIF
						\ENDFOR
						\STATE Output $\bX_L$
					\end{algorithmic}
				\end{algorithm}
			}
		\end{minipage}\vspace{-.3cm}
	\end{wrapfigure}
	
	Here $g$ is a transformation parameterized by $\phi_l$, and $\bW_l^q$ are treated as samples from $q$. The key challenge is that $\bW_l$ are very high dimensional for moderate size neural networks. Thus, we often cannot use a fully connected neural network (MLP) as $g$. Inspired by low-rank matrix factorization \citep{koren2009matrix}, we propose a new kind of network called \emph{Matrix Multiplication Neural Network} (MMNN) to serve as $g$, as shown in Alg.~\ref{algo:MMNN}.
	In each layer of an MMNN, an input matrix $\bX_i$ ($M_{in} \times N_{in}$) is left multiplied with a parameter matrix $\bA_i^l$ ($M_{out} \times M_{in}$) and is added a bias matrix $\bB_i^l$ ($M_{out} \times N_{in}$), then it is right multiplied with a parameter matrix  $\bA_i^r$ ($N_{in} \times N_{out}$) and is added a bias matrix $\bB_i^r$ ($M_{out} \times N_{out}$). Finally it is passed through a nonlinear activation (e.g., ReLU). We call such a layer as an $M_{out} \times N_{out}$ Matrix Multiplication layer.

	
	To model the implicit posterior of $\bW_l$, we only need to randomly sample a matrix $\bW^0_l$ of smaller size, and propagate it through the MMNN to get the output variational samples ($\bW_l^q$):
	\begin{equation*}
	\bW^0_l \sim N(\mathbf{0}, \mathbf{I}),\quad \bW_l^q = \mathrm{MMNN}_{\phi_l} (\bW^0_l).
	\end{equation*}

	When modeling a matrix, MMNN has significant computational advantages over MLPs, due to its low-rank property.
		For example, to model an $M\times N$ weight matrix, consider a single-layer MMNN with the input matrix $\bW_0$ of size $M_0\times N_0$, the parameters needed are $\bA^l_1, \bB^l_1, \bA^r_1, \bB^r_1$ and they are in total of size $MM_0 + MN_0 + N_0N + MN$, while if a single fully-connected layer is used, the parameter size is $M_0N_0MN$, which is much larger.
	Thus, we can use an MMNN as $g$ in the variational posterior for normal-size neural networks. In tasks with very small networks, we still use an MLP as $g$.
	
	\vspace{-.15cm}
	\section{Related Work}
	\vspace{-.15cm}
	
	Our work closely relates to the works on implicit generative models (IGMs, generative models that define implicit distributions) and density ratio estimation (DRE).
	IGMs have drawn much attention due to the popularity of GANs. General learning algorithms of implicit models have been surveyed in \citet{mohamed2016learning}, where DRE plays a central role. The connection between DRE and GANs is also discussed in \citet{uehara2016generative}. For a comprehensive review of DRE, we refer the readers to the survey~\citep{hido2011statistical}. We also refer the readers to many other works by Sugiyama and his collaborators on DRE, such as KLIEP~\citep{sugiyama2008direct}, LSIF~\citep{kanamori2009least}, and DRE based on Bregman divergence~\citep{sugiyama2012density}.

	Our work also builds upon the recent VI methods, including stochastic approximation by mini-batches \citep{hoffman2013stochastic}, direct gradient optimization of variational lower bounds \citep{paisley2012variational,MnihGregor2014}, and the reparameterization trick for training continuous LVMs \citep{kingma2013auto}. Following the success of learning with IGMs, implicit distributions are applied to VI. Many of them are based on discriminators, which can be divided into two categories: prior-contrastive (PC) and joint-contrastive (JC) \citep{huszar2017variational}. In PC methods discriminators distinguish between samples from the prior and those from the variational posterior, while in JC methods they distinguish between the model joint distribution and the joint distribution composed of the data distribution and the variational posterior. Concurrent with \citet{huszar2017variational}, \citet{mescheder2017adversarial} proposed Adversarial Variational Bayes (AVB), which is an amortized version of PC methods for training local LVMs like VAEs. Prior to \citet{huszar2017variational}, similar ideas with JC methods have been proposed in ALI~\citep{dumoulin2016adversarially} and Bi-GAN~\citep{donahue2016adversarial}. Nonparametric VI methods such as PMD~\citep{dai2015provable} and SVGD~\citep{liu2016stein} that adapt a set of particles towards the true posterior are also closely related to implicit VI. They share the similar advantage of flexible approximations. More recently, the amortized version of SVGD has been developed~\citep{liu2016two} and the same idea has been applied to MCMC~\citep{li2017approximate}. It was further shown in~\citet{li2017gradient} that the core identity in SVGD (Stein's identity) could also be employed to approximate the gradients of implicit distributions.
	
	
	\vspace{-.15cm}
	\section{Experiments}
	\label{exp}
	\vspace{-.15cm}
	
	\begin{wrapfigure}{r}{0.5\linewidth}\vspace{-1.5cm}
		\centering
		\subcaptionbox{VI (normal posterior) \label{fig:vi-mixture}}
		{\centering \includegraphics[height=3.2cm]{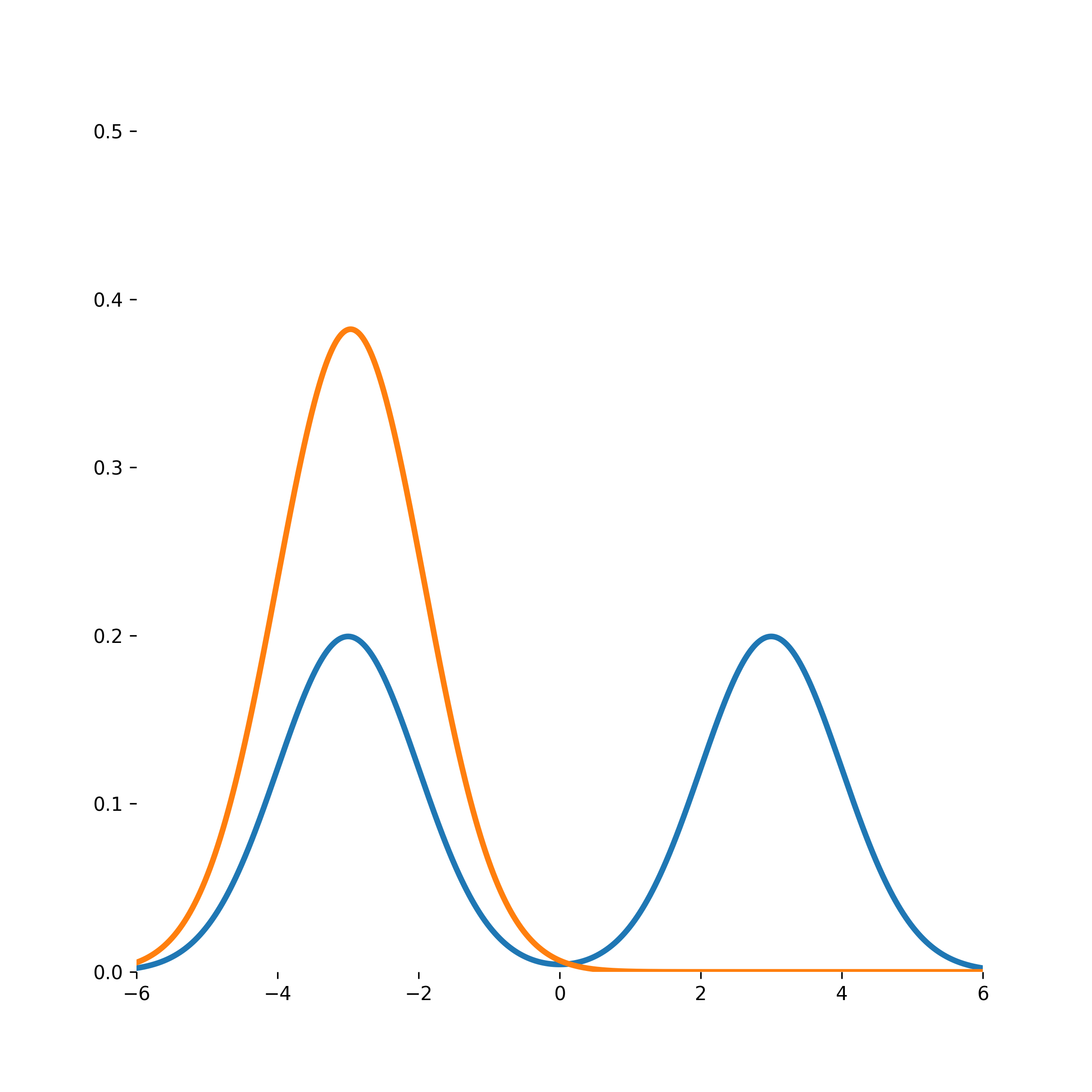}}
		\subcaptionbox{KIVI\label{fig:kivi-mixture}}
		{\centering \includegraphics[height=3.2cm]{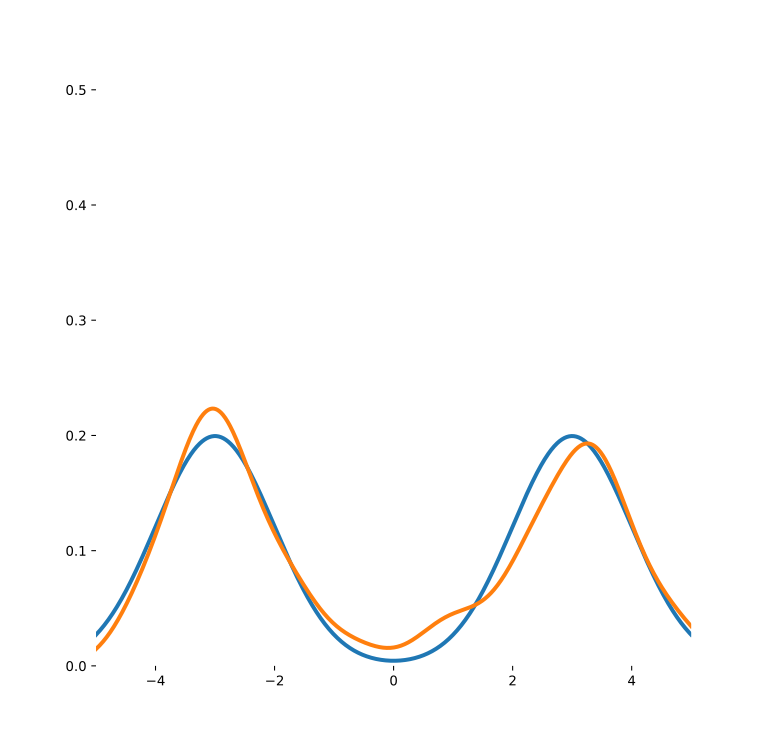}}\vspace{-.3cm}
		\caption{Fitting Gaussian Mixture distribution	\label{fig:g-mixture}}\vspace{-.6cm}
	\end{wrapfigure}
	
	We present empirical results on both synthetic and real datasets to demonstrate the benefits of KIVI. All implementations are based on ZhuSuan \citep{zhusuan2017}.
	
	\vspace{-.15cm}
	\subsection{Toy 1-D Gaussian Mixtures}
	\vspace{-.15cm}

	We firstly conduct a toy experiment to approximate a 1-D Gaussian mixture distribution with VI. The Gaussian mixture distribution has two equally distributed unit-variance components whose means are -3 and 3.
	We compare KIVI with VI using a single Gaussian posterior (Fig.~\ref{fig:g-mixture}). The variational distribution used by KIVI generates samples by propagating a standard normal distribution through a two-layer MLP with 10 hidden units in each layer and one output unit. As shown, the Gaussian posterior converges to a single mode. In contrast, KIVI can accurately approximate the two modes with an expressive variational posterior. We defer another toy experiment on 2-D Bayesian logistic regression to Appendix~\ref{app:blr}, where the importance of the reverse ratio trick is illustrated.
	
	\subsection{Bayesian Neural Networks (BNNs)}
	As stated in Sec.~\ref{sec:ivbnn}, the latent variables in BNNs are global to all data points and are usually very high-dimensional, for which a flexible variational family is essential. We compare KIVI with state-of-the-art VI methods by doing regression and classification on standard benchmarks.

	\subsubsection{Regression} \label{sec:reg}
	\begin{table}[t]\vspace{-.6cm}
		\caption{Average test set RMSE, predictive log-likelihood for the regression datasets.}
		\centering
		\label{tab:regression}
		\setlength{\tabcolsep}{3.5pt}
		\noindent\begin{tabular}{@{}lrrrrrr}
			\multirow{2}{*}{\textbf{Dataset}} & \multicolumn{3}{c}{\textbf{Avg. Test RMSE}}                              & \multicolumn{3}{c}{\textbf{Avg. Test LL}}  \\
			\cmidrule{2-4} \cmidrule{5-7}
			& \multicolumn{1}{c}{SVGD} & \multicolumn{1}{c}{Dropout} & \multicolumn{1}{c}{KIVI} & \multicolumn{1}{c}{SVGD}  & \multicolumn{1}{c}{Dropout} & \multicolumn{1}{c}{KIVI} \\ \midrule
			Boston   & 2.957$\pm$0.099          & 2.97$\pm$0.19 & \textbf{2.798$\pm$0.173} & -2.504$\pm$0.029          & \textbf{-2.46$\pm$0.06} & -2.527$\pm$0.102          \\
			Concrete & 5.324$\pm$0.104          & 5.23$\pm$0.12 & \textbf{4.702$\pm$0.116} & -3.082$\pm$0.018          & \textbf{-3.04$\pm$0.02} & -3.054$\pm$0.043          \\
			Energy   & 1.374$\pm$0.045          & 1.66$\pm$0.04 & \textbf{0.467$\pm$0.015} & -1.767$\pm$0.024          & -1.99$\pm$0.02          & \textbf{-1.298$\pm$0.005} \\
			Kin8nm   & 0.090$\pm$0.001          & 0.10$\pm$0.00 & \textbf{0.075$\pm$0.001} & 0.984$\pm$0.008           & 0.95$\pm$0.01           & \textbf{1.162$\pm$0.008}  \\
			Naval    & 0.004$\pm$0.000          & 0.01$\pm$0.00 & \textbf{0.001$\pm$0.000} & 4.089$\pm$0.012           & 3.80$\pm$0.01           & \textbf{5.501$\pm$0.121}  \\
			Combined & 4.033$\pm$0.033          & 4.02$\pm$0.04 & \textbf{3.976$\pm$0.037} & -2.815$\pm$0.008          & -2.80$\pm$0.01 & \textbf{-2.794$\pm$0.009} \\
			Protein  & 4.606$\pm$0.013          & 4.36$\pm$0.01 & \textbf{4.255$\pm$0.019} & -2.947$\pm$0.003          & -2.89$\pm$0.00 & \textbf{-2.868$\pm$0.005} \\
			Wine     & \textbf{0.609$\pm$0.010} & 0.62$\pm$0.01 & 0.629$\pm$0.008          & \textbf{-0.925$\pm$0.014} & -0.93$\pm$0.01          & -0.958$\pm$0.015         \\
			Yacht    & 0.864$\pm$0.052          & 1.11$\pm$0.09 & \textbf{0.737$\pm$0.068} & \textbf{-1.225$\pm$0.042} & -1.55$\pm$0.03          & -2.123$\pm$0.010          \\
			Year     & \textbf{8.684$\pm$NA}    & 8.849$\pm$NA  & 8.950$\pm$NA             & \textbf{-3.580$\pm$NA}    & -3.588$\pm$NA           & -3.615$\pm$NA             \\
		\end{tabular}\vspace{-.3cm}
	\end{table}
	
	To quantitatively measure the predictive ability of BNNs with KIVI as the inference method, we use standard multivariate regression benchmarks from recent works (Table~\ref{tab:regression}), such as probabilistic backpropagation (PBP) \citep{hernandez2015probabilistic}. We compare with state-of-the-art methods: the Bayesian interpretation of dropout \citep{gal2016dropout} and stein variational gradient descent (SVGD) \citep{liu2016stein} \footnote{Note that VMG~\citep{louizos2016structured} and PBP-MV~\citep{sun2017learning} used adaptive weight priors, which are different from the common setting of standard normal priors; the former also additionally used variational dropout, thus their results are not directly comparable to those of the works discussed here as well as ours.}. Following the setup in PBP, we use BNNs with one 50-unit hidden layer except in the two large datasets, i.e., \textit{Protein Structure} and \textit{Year Predication MSD}, where 100 units are used. We randomly select $90\%$ of the whole dataset for training and leave the rest for testing. We also put a Gamma prior on the precision of the observation noise to adaptively learn it (see Appendix~\ref{app:gamma-prec}). For all datasets, we set $n_p=n_q=M=100, \lambda=0.001$ and set the batch size to 100 and the learning rate to 0.001. The model is trained for 3000 epochs for the small datasets with less than 1000 data points, and 500 epochs for the others. We report the mean errors and standard deviations averaged over 20 runs, except 5 runs for \textit{Protein Structure} and 1 run for \textit{Year Predication MSD}. As networks used in these tasks are of a small scale, we use MLPs with one hidden layer in the implicit variational posterior (see Appendix~\ref{app:exp-bnn-reg} for details).
	
	Table~\ref{tab:regression} shows the results with the best ones marked in bold. Results of SVGD and dropout are cited from their papers, which have the same setting as ours. We can see that KIVI consistently outperforms SVGD and dropout on both RMSE and test-LL for most datasets. Especially on RMSE, KIVI has significant improvements over them except on \textit{Wine} and \textit{Year Predication MSD}. It suggests that KIVI enables the implicit variational posterior to capture the predictive uncertainty in network parameters, which is hard to be fully described by a mixture of two delta distributions (dropout) and a fixed set of particles (SVGD). We emphasize that although the nonparametric nature of SVGD has also made the approximation more flexible, it uses the same set of particles throughout the inference procedure, while each iteration of KIVI generates a new set of particles. Thus the implicit posterior learned by KIVI is smoothed by the parametric model. Recently, normalizing flows have shown good performance on BNNs \citep{louizos2017multiplicative}. So we also experiment with directly applying normalizing flows to this task. The results are reported in Appendix~\ref{app:nf}.
	
	\subsubsection{Classification}
	
	\begin{figure}[t]
		\vspace{-.6cm}
		\begin{subfigure}{0.55\textwidth}
			\begin{center}
				{\footnotesize
					\begin{tabular}{l@{ }l@{ }l@{ }r}
						\rotatebox{0}{\textbf{Method}} & \textbf{\# Hidden} &\textbf{\# Weights} &\textbf{Test err.}\\
						\midrule
						SGD {\tiny \citep{simard_best_2003}}  			  & 800 & 1.3m & $1.6\%$ \\
						Dropout {\tiny \citep{srivastava2014dropout}} 	  &     &      & {\tiny $\approx$} $1.3\%$ \\
						Dropconnect {\tiny \citep{wan2013regularization}} & 800 & 1.3m & $\mathbf{1.2\%}^\star$ \\
						\midrule
						Bayes B. {\tiny\citep{blundell2015weight}}, & 400 & 500k & $1.82\%$ \\
						with Gaussian posterior& 800 & 1.3m & $1.99\%$ \\
						& 1200 & 2.4m & $2.04\%$ \\
						\midrule
						Bayes B. {\tiny\citep{blundell2015weight}},  & 400 & 500k & $1.36\%^\star$ \\
						with scale mixture prior& 800 & 1.3m & $1.34\%^\star$ \\
						& 1200 & 2.4m & $1.32\%^\star$ \\
						\midrule
						KIVI & 400 & 500k & $\mathbf{1.29\%}$ \\
						& 800 & 1.3m & $\mathbf{1.22\%}$ \\
						& 1200 & 2.4m & $\mathbf{1.27\%}$ \\
				\end{tabular}}
			\end{center}
		\end{subfigure}
		\begin{subfigure}{0.44\textwidth}
			\begin{center}
				\resizebox{\linewidth}{!}{\includegraphics{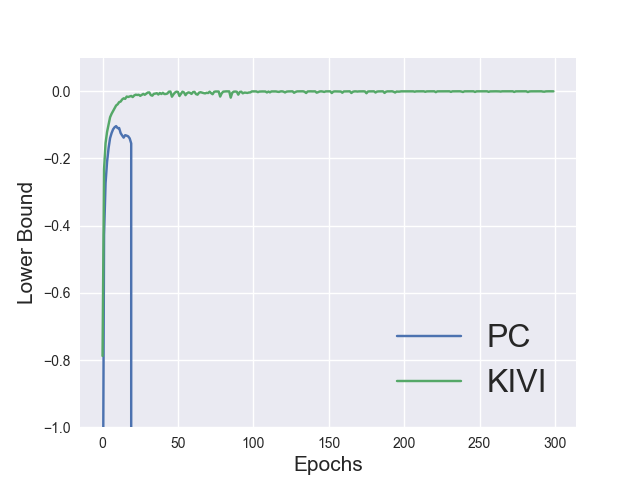}}
			\end{center}
		\end{subfigure}
		\caption{Results for MNIST classification. The left table shows the test error rates. $\star$ indicates results that are not directly comparable to ours: \citet{wan2013regularization} used
			an ensemble of $5$ networks, and the second part of \citet{blundell2015weight} changed the prior to a scale mixture. The plot on the right shows training lower bound in MNIST classification with prior-contrastive and KIVI.}
		\label{fig:mnist}
		\vspace{-\baselineskip}
	\end{figure}
	
	For classification, we present the results on MNIST, which consists of 60,000 training images and 10,000 test images of handwriting digits. Compared to the above datasets in regression, MNIST has a much higher feature dimension, introducing millions of parameters even in moderate-size networks, which brings big challenges for BNNs. As a standard benchmark, the performance on MNIST can be improved by many other techniques, such as convolution, generative pre-training, data augmentation, etc. To ensure a fair comparison, we follow the settings from Bayes-By-Backprop~\citep{blundell2015weight} and focus on improving the performance of ordinary MLPs without using any of these techniques.
	The network structures used are three MLPs, all with two ReLU hidden layers, and the layer sizes are 400, 800 and 1200, respectively. For KIVI, we used MMNNs with two hidden matrix multiplication layers in the implicit posterior (see Appendix~\ref{app:exp-bnn-mnist} for details). We set $n_p=n_q=M=10, \lambda=0.001$, and train for 300 epochs with the batch size as 100. The initial learning rate is 0.001 and is annealed every 100 epochs by ratio 0.5. We used the last 10,000 samples of the training set as the validation
	set for model selection.
	
	Fig.~\ref{fig:mnist} summarizes the results. We can see that KIVI achieves better accuracy compared to plain SGD \citep{simard_best_2003}, dropout \citep{srivastava2014dropout} and Bayes-By-Backprop \citep{blundell2015weight} on all three types of MLPs. KIVI even performs better than Bayes-By-Backprop with a changed prior (scale mixture), which makes the model itself more flexible than ours. When the layer size is 800, our result is comparable to that of an ensemble of 5 networks with dropconnect \citep{wan2013regularization}, which demonstrates that the implicit posterior has been learned to account for most model uncertainty.
	
	We also conduct experiments with the prior-contrastive (PC) method (the counterpart of AVB for global latent variables, see Sec.~\ref{sec:bg}). The key challenge for applying PC here is that the posterior samples are extremely high-dimensional, and if fed into discriminators like neural networks, they will cause unaffordable computation cost. To get around this, we use a logistic regression as the discriminator in PC. The experiment settings of PC are reported in Appendix~\ref{app:exp-bnn-mnist}. The training lower bounds of the two methods are plotted in Fig.~\ref{fig:mnist}. We can see that in the beginning they increase at the same pace, then PC fails to converge with lower bound explosion while KIVI improves consistently. The explosion is mainly because the input to the discriminator is of hundreds of thousands of dimensions, and plain logistic regression cannot produce reliable density ratio estimates. We also experiment with PC for layer size 800 and 1200. They both fail to converge in the end.
	
	\vspace{-.15cm}
	\subsection{Variational Autoencoders} \label{sec:vae}
	\vspace{-.15cm}
	
	\begin{figure}[h]\vspace{-.6cm}
		\begin{subfigure}{0.5\textwidth}
			\begin{center}
				\includegraphics[height=4.2cm]{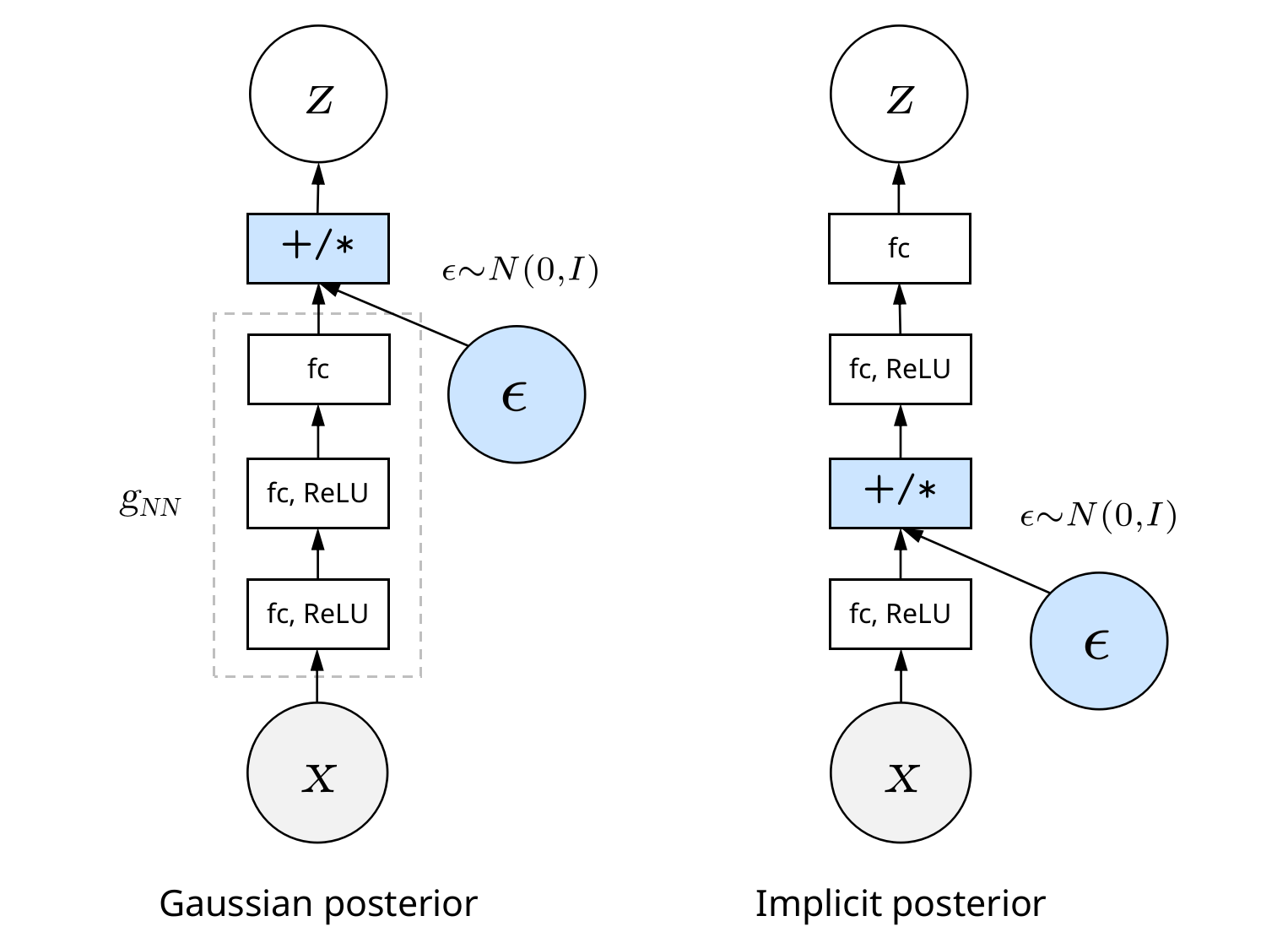}
				\caption{} \label{fig:vae-post}
			\end{center}
		\end{subfigure}
		\begin{subfigure}{0.5\textwidth}
			\begin{center}
				\includegraphics[height=4.2cm]{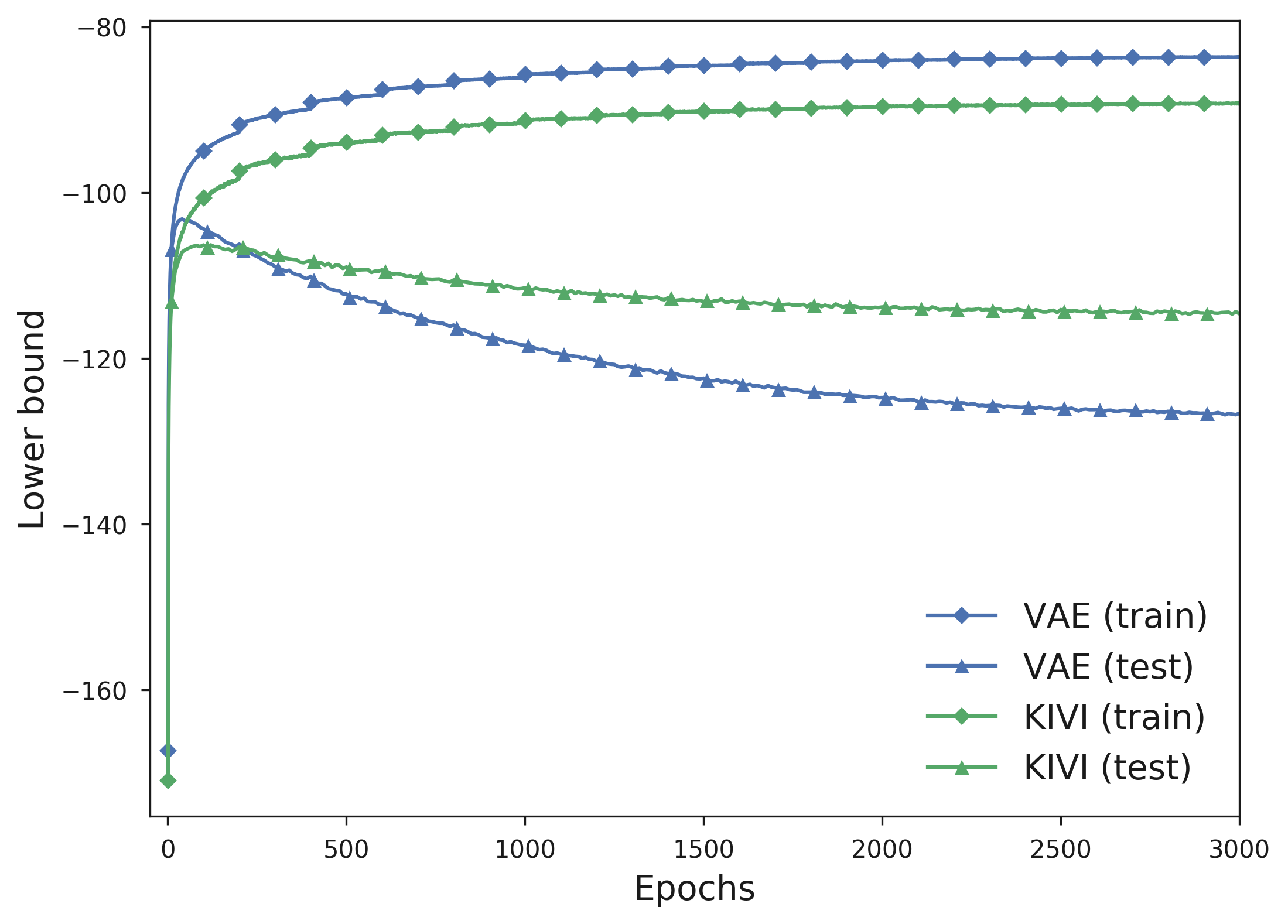}
				\caption{} \label{fig:vae-train}
			\end{center}
		\end{subfigure}
		\caption{Variational Autoencoders: (a) Gaussian posterior vs. implicit posterior, where fc denotes a fully-connected layer. They are used by the plain VAE and KIVI, respectively;
		 (b) Training and evaluation curves of the lower bounds on statically binarized MNIST.
		}
	\end{figure}
	
	
	As stated in Sec.~\ref{sec:algo}, KIVI is applicable to both global and local LVMs, and the latter can be learned using an amortized scheme (i.e., use $q_{\phi}(\bz|\bx)$ instead of $q_{\phi}(\bz)$). Here we present an application on training variational autoencoders (VAE) with implicit posteriors to demonstrate this.
	The generative process of VAEs proceeds as
	\begin{equation*} 
	\bz \sim N(\bzero, \bI),\quad \bx \sim \mathcal{P}(f_{\mathrm{NN}}(\bz)),
	\end{equation*}
	where $\bz$ are latent features, $\bx$ are observations, and $\mathcal{P}$ is the output distribution for modeling the observations, which takes the outputs of the neural network ($f_{\mathrm{NN}}(\bz)$) as parameters.
	We conduct experiments on two widely used datasets for generative modeling: binarized MNIST and CelebA~\citep{liu2015faceattributes}. $\mathcal{P}$ is a Bernoulli distribution for binarized MNIST, while a normal distribution for CelebA. For VAE the variational lower bound has the same form as Eq.~(\ref{eq:elbo}), except that $q_{\phi}(\bz)$ is replaced by $q_{\phi}(\bz|\bx)$.
	{The original VAE parameterizes $q_{\phi}(\bz|\bx)$ as
	\begin{equation*}
		\bz = \bm{\mu}_{\phi}(\bx) + \epsilon\cdot \bm{\sigma}_{\phi}(\bx),\quad \bm{\mu}_{\phi}(\bx), \bm{\sigma}_{\phi}(\bx) = g_{\mathrm{NN}}(\bx; \phi),\quad \epsilon\sim N(\bzero, \bI),
	\end{equation*}
	 where $g_{\mathrm{NN}}$ is a neural network that outputs the parameters of the normal distribution. In the MNIST case, $g_{\mathrm{NN}}$ is an MLP with two hidden layers, which is illustrated in Fig.~\ref{fig:vae-post} (left). To form an implicit posterior, a direct choice is to move the stochastic noise from the output layer to the penultimate hidden layer, as illustrated by Fig.~\ref{fig:vae-post} (right).
	}

	Before applying KIVI, a crucial question is what we expect to get by using implicit posteriors for training VAEs. One target could be that we may get tighter lower bounds of the data log-likelihood (LL) because the algorithm searches in a larger variational family for the optimal lower bound. This suggests, however in a very weak way, that doing optimization in the larger space will lead to better test LL, given the optimization always arrives at local optima. Previously Adversarial Variational Bayes (AVB) has shown some results on MNIST, by comparing the test LL of the plain VAE and the VAE trained by AVB, using golden truths estimated by annealed importance sampling (AIS)~\citep{wu2016quantitative}. However, for the results reported in AVB, the model architectures used by the plain VAE and AVB are very different, which leads to concerns about which part of the change contributes to the improved likelihoods.
	
	Here we adopt another setting to better demonstrate the gain from implicit posteriors. We observe that the key improvement of implicit posteriors is that objectives with them average over a much broader range of posterior configurations. This effect not only contributes to a larger search space that contains tighter lower bound values, but also makes the VAE model better prevent overfitting. To verify that KIVI keeps this property, we conduct experiments on a statically binarized MNIST dataset and use models with no prior knowledge of the problem (MLPs instead of convnets), which is a typical setting that leads to overfitting. The latent dimension of the VAE model is 8, and $f_{\mathrm{NN}}(\bz)$ is an MLP with two hidden layers of size 500. The parameters for KIVI are $n_p=n_q=100, M=1$. More details can be found in Appendix~\ref{app:exp-vae}. Fig~\ref{fig:vae-train} shows the training and testing curves of VAEs with or without KIVI. It can be seen that the lower bound gap between the training and testing curves of the plain VAE are much larger than that of KIVI, which indicates that the VAE trained by KIVI is less prone to overfitting. After 3k epochs we evaluate the test LL on 2048 test images using AIS and get \textbf{-97.3} for the plain VAE, while \textbf{-94.8} for the VAE trained by KIVI.
	
	
	\begin{figure}[t]\vspace{-.5cm}
		\centering
		\begin{subfigure}[b]{.05\textwidth}
			\centering
			\includegraphics[height=4.5cm]{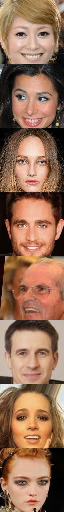}
		\end{subfigure}
		\begin{subfigure}[b]{.35\textwidth}
			\centering
			\includegraphics[height=4.5cm]{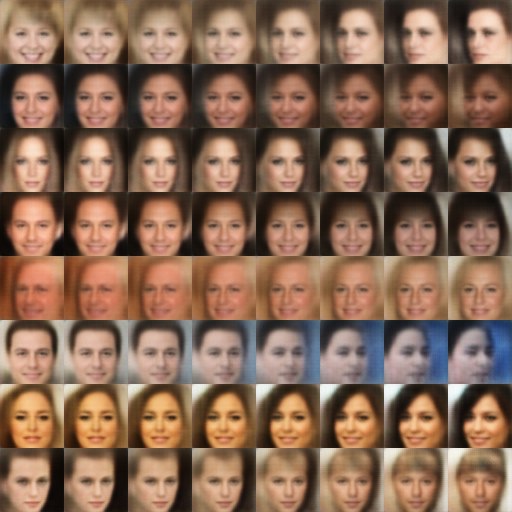}
		\end{subfigure}
		\begin{subfigure}[b]{.05\textwidth}
			\centering
			\includegraphics[height=4.5cm]{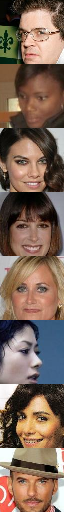}
		\end{subfigure}
		\hspace*{2.5em}
		\begin{subfigure}[b]{.05\textwidth}
			\centering
			\includegraphics[height=4.5cm]{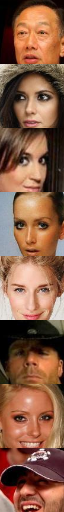}
		\end{subfigure}
		\begin{subfigure}[b]{.35\textwidth}
			\centering
			\includegraphics[height=4.5cm]{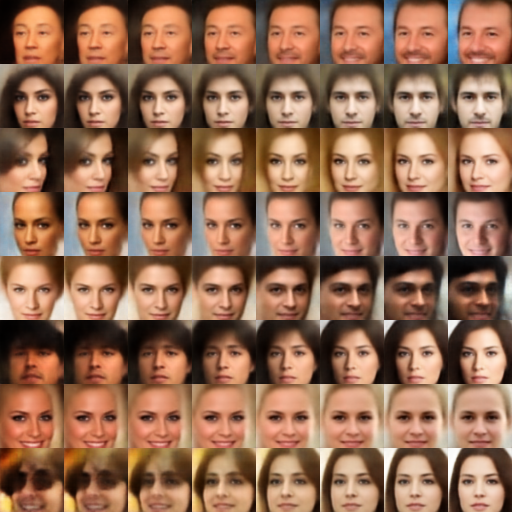}
		\end{subfigure}
		\begin{subfigure}[b]{.05\textwidth}
			\centering
			\includegraphics[height=4.5cm]{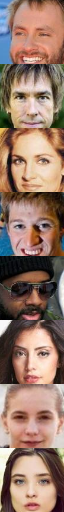}
		\end{subfigure}
		\caption{Interpolation experiments for CelebA: AVB (left); KIVI (right).}
		\label{fig:interp}
	\end{figure}
	
	To demonstrate that KIVI scales to larger models, we trained a VAE with the same network structure used by DCGAN \citep{radford2015unsupervised} on CelebA using KIVI. The latent dimension in this case is 32. The implicit posterior is constructed in a way similar to the one shown in Fig.~\ref{fig:vae-post}, with the bottom hidden layers symmetric to the decoder network (See Appendix~\ref{app:exp-vae} for details). To visually check the latent space learned by KIVI, we show the reconstruction results of linearly interpolating between the latent $\bz$ vectors of two real images after 25 epochs (Fig.~\ref{fig:interp}), when the model has converged. Compared to the latent space learned by AVB after the same epochs (we use the public code from AVB and set the same decoder structure), we find ours are smoother, and the interpolated images are much sharper. More interpolation results through the training process are presented and compared in Appendix~\ref{app:interp}. We also use the learned model to generate 10,000 images and evaluate the sample quality using Fr{\'e}chet Inception Distance (FID)~\citep{heusel2017gans}. The FID scores achieved at epoch 25 by AVB and KIVI are 160 and 41, respectively (smaller is better). In fact many efforts are required to make AVB successfully train a model for CelebA to produce results shown in the figure. As reported in \citet{mescheder2017adversarial}, the log prior $\log p(\bz)$ is explicitly added to the discriminator ($T(\bx, \bz)$), while KIVI does not need much tuning: there is no need to carefully design a discriminator, and the only two hyper-parameters (i.e., $\lambda$ and the clipping value) both have clear meanings.
	
	\section{Conclusions}
	We present an implicit VI method named Kernel Implicit Variational Inference (KIVI), which provides a principled way of tuning bias-variance tradeoff and makes implicit VI computationally feasible for models with high-dimensional latent variables. We successfully applied this approach to Bayesian neural networks and achieved superior performance on both regression and classification tasks. We also demonstrate that KIVI can be applied to learn local latent variable models like VAEs. Future work may include applying this method to larger-scale networks and improving the kernel estimator further.

	

	
	

	 \subsubsection*{Acknowledgments}
	 The work was supported by the National Natural Science Foundation of China (NSFC) Projects (Nos. 61620106010, 61621136008), Beijing Natural Science Foundation No. L172037, Tsinghua Tiangong Institute for Intelligent Computing, the NVIDIA NVAIL Program and a Project from Siemens.
	
	\bibliography{ref}
	\bibliographystyle{iclr2018_conference}
	
	\newpage
	\appendix
	
	\section{Proof of Proposition 1}
	\label{app:rep}
	
	\begin{proof}
		Let $V$ denote the span of the representers of the sample points:
		\begin{equation*}
		V = \mathrm{span}(\{k(\bz_i, \cdot): i=1, \dots, n_p + n_q\} = \left\lbrace\sum_{i=1}^{n_p}\alpha_i k(\bz_i^p, \cdot) + \sum_{j=1}^{n_q}\beta_j k(\bz_j^q, \cdot)\right\rbrace,
		\end{equation*}
		where we label $\bz_{1:n_p}^p$ as $\bz_{1:n_p}$ and $\bz_{1:n_q}^q$ as $\bz_{n_p + 1:n_p + n_q}$. Define the orthogonal complement $V_{\perp}$ to be
		\begin{equation*}
		V_{\perp} = \{f \in \mathcal{H}: \langle f, g\rangle = 0, \forall g \in V\}.
		\end{equation*}
		Because $\hat{r} \in \mathcal{H}$, it can be decomposed into two parts:
		\begin{equation*}
		\hat{r} = \hat{r}_{\parallel} + \hat{r}_{\perp},
		\end{equation*}
		where $\hat{r}_{\parallel} \in V$, and $\hat{r}_{\perp} \in V_{\perp}$.
		Note that the squared loss is not changed by the orthogonal component $\hat{r}_{\perp}$:
		\begin{equation*}
		\hat{r}(\bz_i) = (\hat{r}_{\parallel} + \hat{r}_{\perp})(\bz_i) = \hat{r}_{\parallel}(\bz_i) + \langle \hat{r}_{\perp}, k(\bz_i, \cdot)\rangle = \hat{r}_{\parallel}(\bz_i).
		\end{equation*}
		Meanwhile the regularization term can be decomposed into $\|\hat{r}_{\parallel}\|^2_{\mathcal{H}} + \|\hat{r}_{\perp}\|^2_{\mathcal{H}}$. So the optimal solution will have $\hat{r}^*_{\perp} = 0$, which indicates that $\hat{r}^* \in V$.
	\end{proof}
	
	\section{Derivation of the Optimal Density Ratio Function}
	\label{app:closed}
	
	Substituting Eq.~(\ref{eq:rkhs-r}) into Eq.~(\ref{eq:obj}) and removing the constant, we get
	\begin{equation*}
	\begin{aligned}
	\mathcal{L}(\balpha, \bbeta) =\;&\frac{1}{2n_p}\left( \|\bK_p\balpha\|^2_2 + \|\bK_{pq}\bbeta\|^2_2 + 2\balpha^\top\bK_p\bK_{pq}\bbeta\right) - \frac{1}{n_q}\left(\balpha^{\top}\bK_{pq}\mathbf{1} + \bbeta^\top\bK_q\mathbf{1}\right) \\
	& + \frac{\lambda}{2}(\balpha^\top\bK_p\balpha + \bbeta^\top\bK_q\bbeta + 2\balpha^\top\bK_{pq}\bbeta),\label{eq:obj2}
	\end{aligned}
	\end{equation*}
	where $(\bK_p)_{i,j} = k(\bz^p_i, \bz^p_j)$, $(\bK_{pq})_{i,j} = k(\bz^p_i, \bz^q_j)$, and $(\bK_q)_{i,j} = k(\bz^q_i, \bz^q_j)$. Taking derivatives w.r.t. $\balpha$ and $\bbeta$ and setting them to zeros:
	\begin{align*}
	\frac{\partial \mathcal{L}}{\partial \balpha} &= \frac{1}{n_p}\bK_p(\bK_p\balpha + \bK_{pq}\bbeta) - \frac{1}{n_q}\bK_{pq}\mathbf{1} + \lambda\bK_p\balpha + \lambda\bK_{pq}\bbeta = \bzero, \\
	\frac{\partial \mathcal{L}}{\partial \bbeta} &= \frac{1}{n_p}\bK_{pq}^\top(\bK_{pq}\bbeta + \bK_p\balpha) - \frac{1}{n_q}\bK_q\mathbf{1} + \lambda\bK_q\bbeta + \lambda\bK_{pq}^{\top}\balpha = \bzero.
	\end{align*}
	Rearranging the terms, we have
	\begin{align}
	\left(\frac{1}{n_p}\bK_p\bK_{pq} + \lambda\bK_{pq}\right)\bbeta + \bK_p\left(\frac{1}{n_p}\bK_p + \lambda\bI\right)\balpha = \frac{1}{n_q}\bK_{pq}\bone, \label{eq:deriv1} \\
	\left(\frac{1}{n_p}\bK_{pq}^\top\bK_{pq} + \lambda\bK_q\right)\bbeta + \bK_{pq}^\top\left(\frac{1}{n_p}\bK_p + \lambda\bI\right)\balpha = \frac{1}{n_q}\bK_q\bone. \label{eq:deriv2}
	\end{align}
	Left multiplying Eq.~(\ref{eq:deriv1}) with $\bK_{pq}^\top\bK_p^{-1}$ and then subtracting Eq.~(\ref{eq:deriv2}) yields
	\begin{equation*}
	\bbeta = \frac{1}{\lambda n_q}\mathbf{1}.
	\end{equation*}
	Substituting the optimal $\bbeta$ into Eq.~(\ref{eq:deriv1}), we get
	\begin{equation*}
	\balpha = -\frac{1}{\lambda n_pn_q}(\frac{1}{n_p}\bK_p + \lambda\bI)^{-1}\bK_{pq}\mathbf{1}.
	\end{equation*}
	Note that although $\bK_p^{-1}$ is used in the derivation, the forms of the solution do not require $\bK_p$ to be invertible. In fact, if $\bK_p$ has zero eigenvalues (not invertible), the objective $\mathcal{L}(\balpha, \bbeta)$ is not bounded below since one can always choose an $\balpha$ in the null space of $\bK_p$ to decrease it. In other words, the form of the solution is well defined in any condition and is optimal when the objective is well defined.
	
	\section{Gradients of the KL Term} \label{app:grad-kl}
	\begin{equation*}
	\begin{aligned}
	\nabla_{\phi}\mathrm{KL}(q_{\phi}||p) &= \nabla_{\phi}\mathrm{E}_{q_{\phi}}\log \frac{q_{\phi}}{p} \\
	&= \int \left[\nabla_{\phi}q_{\phi}(z) \log \frac{q_{\phi}(z)}{p(z)} + q_{\phi}(z)\nabla_{\phi} \log \frac{q_{\phi}(z)}{p(z)}\right] \mathrm{d}z \\
	&= \nabla_{\phi}\mathrm{E}_{q_{\phi}}\log \frac{q}{p} + \int \nabla_{\phi}q_{\phi}(z) \mathrm{d}z \\
	&= \nabla_{\phi}\mathrm{E}_{q_{\phi}}\log \frac{q}{p}.
	\end{aligned}
	\end{equation*}
	The formula above shows that a good density ratio estimator gives accurate gradient estimation of $\phi$.
	
	\section{KIVI with Adaptive Contrast}
	\label{app:ac}
	Although KIVI gives rather good estimation of the density ratio, the estimation accuracy still degrades with larger discrepancy between $p$ and $q$. The problem is very critical for local latent variable models like VAEs because the same variational model is required to infer posteriors of all local latent variables. In order to mitigate that, AVB \citep{mescheder2017adversarial} adopted a technique called Adaptive Contrast (AC), which can easily be integrated with KIVI. AC introduces an auxiliary tractable distribution $r_{\alpha}(\bz|\bx)$ that resembles $q$. With $r_{\alpha}(\bz|\bx)$, the ELBO can be rewritten as
	\begin{equation*}
	\begin{aligned}
	\mathcal{L}(\bx; \phi) = \mathrm{E}_{q_{\phi}}(-\log r_{\alpha}(\bz|\bx) + \log p(\bx, \bz)) - \mathrm{KL}(q_{\phi}(\bz|\bx) || r_{\alpha}(\bz|\bx)).
	\end{aligned}
	\end{equation*}
	Gradients of the first term w.r.t. $\phi$ can be easily computed using Monte Carlo, and gradients of the second term can be estimated using KIVI. Adaptive contrast gives better estimates if $r_{\alpha}(\bz|\bx)$ approximates $q$ well. Because $r$ is required to have a tractable density, a commonly used adaptive distribution is a Gaussian distribution whose mean $\mu_r$ and standard derivation $\sigma_r$ match with $q$. In practice $\mu_r$ and $\sigma_r$ are estimated from the samples of $q$. According to the invariance of KL divergence under reparameterization, we have
	$$
	\mathrm{KL}(q_{\phi}(\bz|\bx) || r_{\alpha}(\bz|\bx)) = \mathrm{KL}(\hat{q}_{\phi}(\hat{\bz}|\bx) || \hat{r}_{0}(\hat{\bz})),
	$$
	where $\hat{q}_{\phi}(\bz|\bx)$ denotes the distribution of $\hat{\bz}=\frac{\bz - \mu_r}{\sigma_r}$ and $\hat{r}_{0}(\hat{\bz})$ denotes the standard normal distribution. Under this reparameterization, we only need to estimate the density ratio between two distributions with zero means and unit variances.
	
	\section{Lower Bound with Gamma-Prior Precision}
	\label{app:gamma-prec}
	In the multivariate regression task, the output is sampled from a normal distribution with $\hat{y}\left(x, \bW\right)$ as the mean and a parameter as the variance. The variance controls the likelihood of the model, therefore, choosing an appropriate variance is essential. We place a Gamma prior $\mathrm{Gamma}\left(6,6\right)$ on its reciprocal (i.e., the precision of the normal distribution). The variational posterior we used is $q(\bW, \lambda) = q(\bW)q(\lambda), q(\lambda) \sim \mathrm{Gamma}(\alpha, \beta)$. Then the ELBO can be calculated as
	\begin{equation*}
	\begin{aligned}
	\mathcal{L} =&\; \mathbb{E}_{q\left(\bW\right)} \mathbb{E}_{q\left(\lambda\right)} \log p\left(y | \bx, \bW, \lambda\right) - \mathrm{KL}\left(q\left(\bW\right) \| p\left(\bW\right)\right) - \mathrm{KL}\left(q\left(\lambda\right) \| p\left(\lambda\right)\right) \\
	=&\; \mathbb{E}_{q\left(\bW\right)} \mathbb{E}_{q\left(\lambda\right)} \log \mathcal{N}(y \mid \hat{y}(\bx, \bW), \frac{1}{\lambda} ) - \mathrm{KL}\left(q\left(\bW\right) \| p\left(\bW\right)\right) - \mathrm{KL}\left(q\left(\lambda\right) \| p\left(\lambda\right)\right) \\
	=&\; \frac{1}{2}\;\mathbb{E}_{q\left(\bW\right)} \mathbb{E}_{q\left(\lambda\right)} \left[ \log \lambda - \lambda \left(y-\hat{y}\left(\bx, \bW\right)\right)^2 - \log2\pi \right]
	- \mathrm{KL}\left(q\left(\bW\right) \| p\left(\bW\right)\right) \\
	& -\mathrm{KL}\left(q\left(\lambda\right) \| p\left(\lambda\right)\right) \\
	=&\; \frac{1}{2}\;\mathbb{E}_{q\left(\bW\right)} \left[ \psi \left(\alpha\right) - \log \beta - \frac{\alpha}{\beta}\left(y-\hat{y}\left(\bx, \bW\right)\right)^2 -\log 2\pi \right]
	- \mathrm{KL}\left(q\left(\bW\right) \| p\left(\bW\right)\right) \\
	& -\mathrm{KL}\left(q\left(\lambda\right) \| p\left(\lambda\right)\right),
	\end{aligned}
	\end{equation*}
	
	where $\psi\left(x\right)$ is the digamma function and $\mathrm{KL}\left(q(\lambda) \| p(\lambda)\right)$ can be calculated in closed-form.
	
	\section{Additional Experiment Results}
	
	\subsection{2-D Bayesian Logistic Regression}
	\label{app:blr}
	\begin{figure}[h]
		\centering
		\subcaptionbox{Training data\label{fig:decision}}
		{\includegraphics[height=2cm]{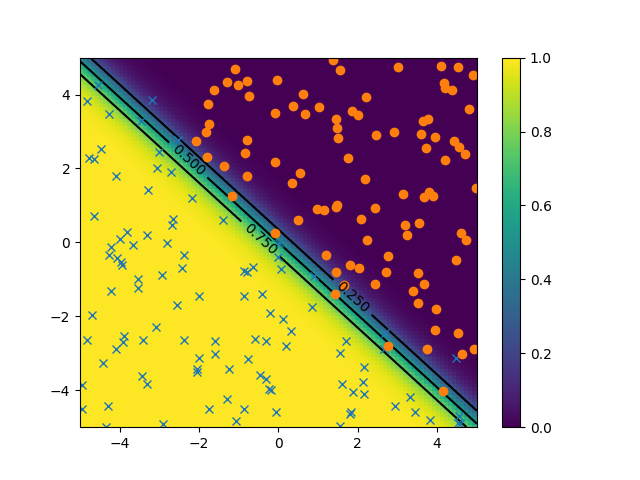}}
		\subcaptionbox{True posterior\label{fig:unnormalized}}
		{\includegraphics[height=2cm]{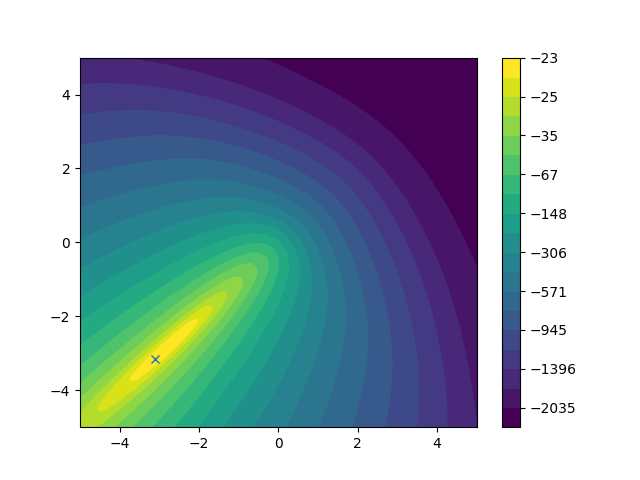}}
		\subcaptionbox{VI (factorized)\label{fig:mean-field}}
		{\includegraphics[height=2cm]{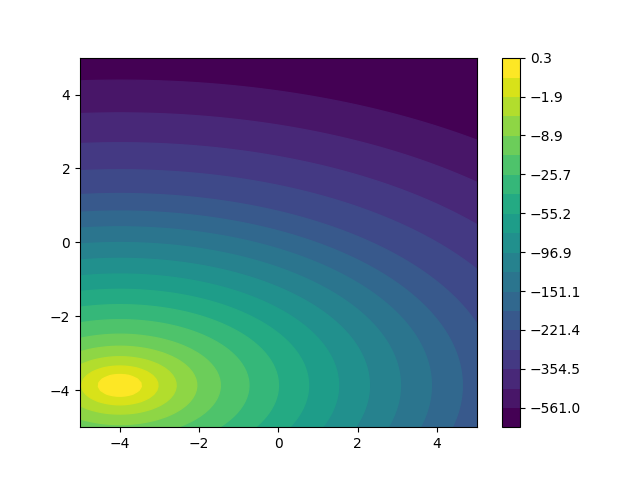}}
		\subcaptionbox{HMC \label{fig:hmc-contour}}
		{\includegraphics[height=2cm]{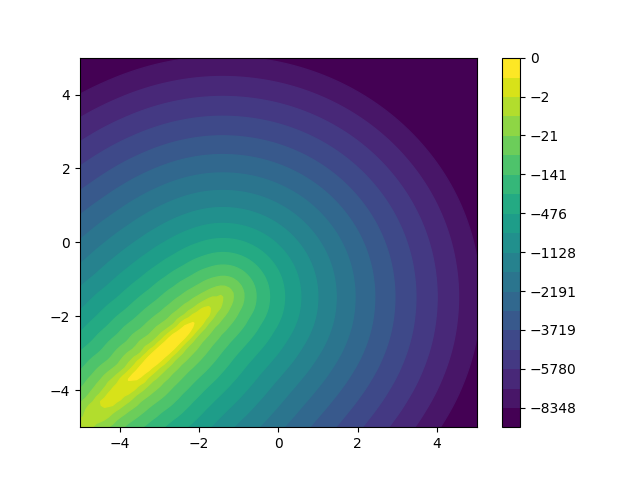}}
		\subcaptionbox{KIVI \label{fig:implicit-contour}}
		{\includegraphics[height=2cm]{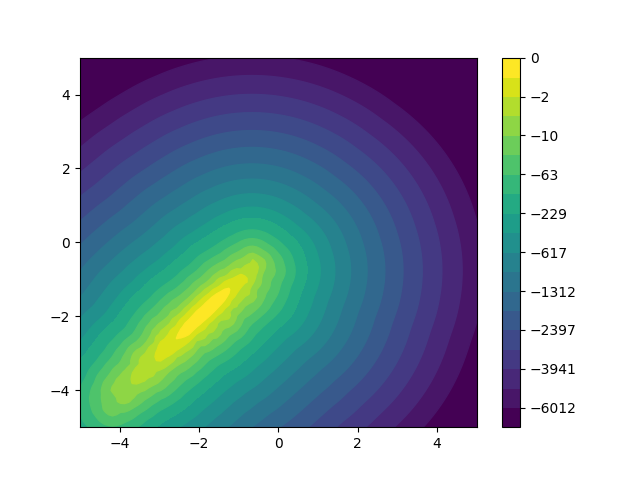}}
		
		\caption{2-D Bayesian logistic regression} \label{fig:blr-model}
	\end{figure}
	
	We also conduct experiments on a 2-D Bayesian logistic regression example, which has an intractable posterior. The model is
	\begin{equation*}
	\bw \sim N(\mathbf{0}, \bI),\quad y_i \sim \mathrm{Bernoulli}(\sigma(\bw^{\top}\bx_i)),\quad i=1, \dots, N,
	\end{equation*}
	where $\bw, \bx_i \in \mathbb{R}^2$; $\sigma$ is the sigmoid function. $N=200$ data points ($\{(x_i, y_i)\}_{i=1}^{N}$) are generated from the true model as the training data (Fig.~\ref{fig:decision}). The unnormalized true posterior is plotted in Fig.~\ref{fig:unnormalized}. As a baseline, we first run VI with a factorized normal distribution. The result is shown in Fig.~\ref{fig:mean-field}. It can be clearly seen that the factorized normal can capture the position and the scale of the true posterior but cannot fit well to the shape due to its independence across dimensions.
	
	We then apply KIVI. The implicit posterior we use is a simple stochastic neural network (see Appendix~\ref{app:exp-toy}). To demonstrate how good the result is, we also run Hamiltonian Monte Carlo (HMC) to get posterior samples. The results are plotted in Fig.~\ref{fig:hmc-contour} and \ref{fig:implicit-contour}. We can see that the implicit posterior is learned to capture the strong correlation between the two dimensions and can produce posterior samples that have a similar shape with the samples drawn by HMC.
	
	\begin{figure}[h]
		\centering
		\subcaptionbox{\label{fig:with-reverse}}
		{\includegraphics[height=2cm]{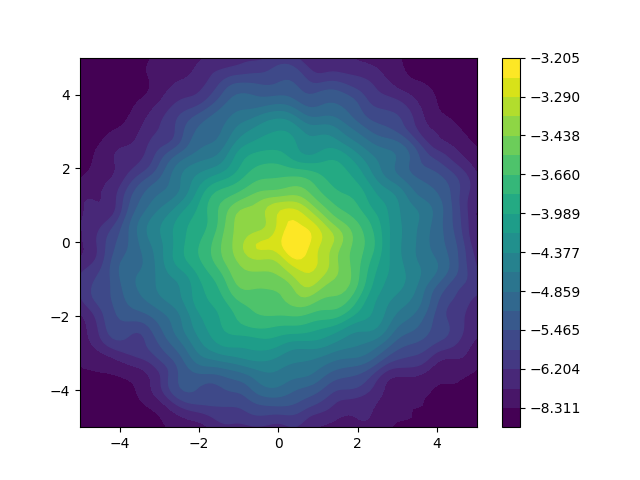}}
		\subcaptionbox{\label{fig:without-reverse}}
		{\includegraphics[height=2cm]{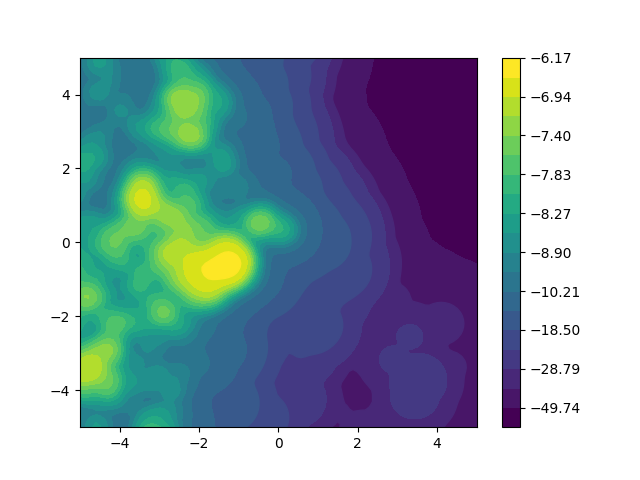}}

		\caption{Variational distributions produced by only optimizing $\mathrm{KL}(q(\bw)\|p(\bw))$: (a) With the reverse ratio trick; (b) Without the reverse ratio trick.} \label{fig:reverse-trick}
	\end{figure}
	
	We also use this experiment to illustrate the importance of the reverse ratio trick. See Figure~\ref{fig:reverse-trick} for the implicit variational distribution learned by only optimizing $\mathrm{KL}(q_{\phi}(\bw)\|p(\bw))$. In this way we expect the learned $q$ to be close to the prior $N(\bzero, \bI)$. The result produced by using the trick is compared to the result without it. We can see that the latter fails to work well.

\begin{table}[t]
	\caption{Test RMSE, log-likelihood for the regression datasets. Factorized and NF represent VI with factorized normal posteriors and normalizing flow, respectively.}
	\label{tab:nf}
	\centering
	\begin{tabular}{lrrrrr}
		\textbf{RMSE} & \multicolumn{1}{c}{\textbf{Factorized}}   & \multicolumn{1}{c}{\textbf{NF}}   &
		\multicolumn{1}{c}{\textbf{KSD VI}}   & \multicolumn{1}{c}{\textbf{KIVI}}  &
		\multicolumn{1}{c}{\textbf{HMC}}  \\ \midrule
		boston   & 3.42$\pm$0.19  & 3.43$\pm$0.19  & 4.10$\pm$0.25 & 2.80$\pm$0.17 & \textbf{2.20$\pm$0.05}  \\
		concrete & 6.00$\pm$0.10  & 6.04$\pm$0.10  & 12.49$\pm$0.19 & 4.70$\pm$0.12 & \textbf{4.27$\pm$0.13}  \\
		energy   & 2.42$\pm$0.06  & 2.48$\pm$0.09  & 5.54$\pm$0.14 & \textbf{0.47$\pm$0.02} & \textbf{0.47$\pm$0.01}  \\
		kin8nm   & 0.09$\pm$0.00  & 0.09$\pm$0.00  & 0.26$\pm$0.00 & 0.08$\pm$0.00 & \textbf{0.07$\pm$0.00} \\
		naval    & 0.01$\pm$0.00  & 0.01$\pm$0.00  & 0.01$\pm$0.01& \textbf{0.00$\pm$0.00} & \textbf{0.00$\pm$0.00} \\ \midrule
		\textbf{LL} & \multicolumn{1}{c}{\textbf{Factorized}}   & \multicolumn{1}{c}{\textbf{NF}}   &
		\multicolumn{1}{c}{\textbf{KSD VI}} &
		\multicolumn{1}{c}{\textbf{KIVI}} &
		\multicolumn{1}{c}{\textbf{HMC}} \\ \midrule
		boston   & -2.66$\pm$0.04 & -2.66$\pm$0.04 & -3.32$\pm$0.01 & -2.53$\pm$0.10 & \textbf{-2.29$\pm$0.01} \\
		concrete & -3.22$\pm$0.06 & -3.24$\pm$0.06 & -4.13$\pm$0.01 & -3.05$\pm$0.04 & \textbf{-2.81$\pm$0.02} \\
		energy   & -2.34$\pm$0.02 & -2.36$\pm$0.03 & -3.61$\pm$0.00 & \textbf{-1.30$\pm$0.01} & -1.43$\pm$0.01 \\
		kin8nm   & 0.96$\pm$0.01  & 1.01$\pm$0.01  & -0.18$\pm$0.01 & 1.16$\pm$0.01 & \textbf{1.22$\pm$0.01} \\
		naval    & 4.00$\pm$0.11  & 4.04$\pm$0.12  & 3.28$\pm$0.13 & 5.50$\pm$0.12 & \textbf{7.31$\pm$0.00} \\
	\end{tabular}
\end{table}

	\subsection{Comparison with More Methods}
	\label{app:nf}
	In this section we present comparisons with two more methods towards flexible posteriors, namely, normalizing flow~\citep{rezende2015variational} and KSD variational inference (KSD VI)~\citep{liu2016two}. We also include the results by HMC as the ground truth.
	
	\textbf{Normalizing flow}\;
	 The basic idea of normalizing flow has been introduced in Section~\ref{intro}. Specifically, given a random variable $\bz\in \mathbb{R}^d$ following a simple distribution $q(\bz)$, and an invertible mapping $T: \mathbb{R}^d\to \mathbb{R}^d$ so that $\bz' = T(\bz)$, the probability density of the transformed variable $\bz'$ can be calculated as
	 \begin{equation} \label{eq:tov}
	 q_{T}(\bz') = q(\bz)\left|\mathrm{det}\left(\frac{\partial T(\bz)}{\partial \bz}
	 \right)\right|^{-1}.
	 \end{equation}
	 Thus we can construct complex distributions by composing invertible mappings ($T$), while keeping the probability density of the result distribution tractable by successively applying Eq.~(\ref{eq:tov}). For example, \citet{rezende2015variational} proposed the planar normalizing flow:
	 \begin{equation*}
	 	T(\bz) = \bz + \bu h(\bw^{\top}\bz + b),
	 \end{equation*}
	where $\bu, \bw \in \mathbb{R}^d, b \in \mathbb{R}$ are free parameters, and $h$ is a smooth element-wise nonlinearty, chosen as Tanh in the following experiments. A simple variational posterior can be made more expressive when this parametric transformation is employed.

	\textbf{KSD VI}\;
	KSD variational inference~\citep{liu2016two} is a method that directly minimizes the kernelized Stein discrepancy (KSD) between the true posterior ($p$) and the variational posterior ($q$):
	\begin{equation}\label{eq:ksd}
	\begin{aligned}
	\mathbb{S}(q, p) =&\;\max_{f\in \mathcal{H}^d}\left(\mathbb{E}_q[s_p(\bz)^{\top}f(\bz)+\nabla_{\bz}\cdot f(\bz)]\right) \\
	=&\;\mathbb{E}_{\bz, \bz'\sim q}\big[s_p(\bz)^{\top}k(\bz, \bz')s_p(\bz') + s_p(\bz)^{\top}\nabla_{\bz'}k(\bz, \bz') \\&\;+ \nabla_{\bz}k(\bz, \bz')^{\top}s_p(\bz') + \nabla_{\bz}\cdot \nabla_{\bz'}k(\bz, \bz')\big],
	\end{aligned}
	\end{equation}
	where $\mathcal{H}^d$ is a unit ball in the vector-valued RKHS induced by the RBF kernel $k(\bz, \bz') = \frac{\|\bz - \bz'\|^2_2}{2\sigma^2}$, and $s_q(\bz) = \nabla_{\bz}\log p(\bz|\bx)$. Note that the objective in Eq.~(\ref{eq:ksd}) only requires samples from $q$, so KSD VI also belongs to implicit VI methods.

	 We conduct the regression experiments in Section~\ref{sec:reg}. The results are shown in Table~\ref{tab:nf}. For normalizing flow, we apply 10 planar flows on the weights to match the running time of our implicit posteriors. To see whether flows help the inference, we also present results of VI using factorized normal distributions on the weights (with their means and standard deviations optimized by the reparameterization trick). For KSD VI, we use the same implicit posteriors as those we used for KIVI in Section~\ref{sec:reg}. For VI with factorized normal distributions and normalizing flow, we use 100 samples, batch size 10 except that for \textit{kin8nm} and \textit{naval} we use batch size 100. We set the learning rate to 0.01 and run 500 epochs for 20 times. For KSD VI, we set all the training parameters (number of samples, number of epochs, batch size, and learning rate) the same as KIVI's. We also run HMC to produce the ground truths. For HMC, we use 20 chains, 150000 iterations and set the target acceptance rate to 97\% to adapt the step size. As the HMC experiment is time-consuming, we only perform 10 runs.
	
	From Table~\ref{tab:nf} we can see that normalizing flow does not show improvements over VI with factorized normal distributions. This is probably due to optimization challenges caused by the limited form of planar flows, thus the inference procedure takes little benefit from the flexibility introduced by the flow. For KSD VI, we found it is very sensitive to the initialization of implicit posteriors. We had to set the variance of the input Gaussian noise larger so that KSD VI would not diverge at the very beginning of training. KSD VI also soon converges to unsatisfying local optima after the optimization process starts. These two findings are well explained by the conclusion that KSD is the magnitude of a functional gradient of KL divergence~\citep{liu2016stein}, then all saddle points in the original problem of optimizing the KL divergence become local optima when optimizing KSD.

%
	
	\subsection{Visualization of Inferred Posteriors}
	In order to visually check the quality of the uncertainty estimated by KIVI, we use a visualization technique named parallel coordinates~\citep{inselberg1987parallel} to plot the posterior over weights. We compare the posteriors inferred in the regression experiment on \textit{Boston housing} (Table~\ref{tab:nf}) by VI with factorized normal approximation, HMC, and KIVI.
	
	The feature dimension of the Boston housing data is 13. And the BNN used is an MLP with a hidden layer of size 50. So the network has two weight matrices: $\bW_0 \in \mathbb{R}^{14\times 50}$ and $\bW_1 \in \mathbb{R}^{51\times 1}$ (The bias parameters are included). Since $\bW_0$ has 700 dimensions, we only plot the posterior samples of $\bW_1$ to avoid visual clutter.	Specifically, we draw 100 samples from the posterior: $\bW_1^{(1:100)}\sim q(\bW_1)$, and the goal is to show these 51-dimensional samples. In parallel coordinates each sample is plotted as a polyline (see Figure~\ref{fig:weight}). The vertices on the polyline represent each single dimension. Their positions on the horizontal axis are the indices of the dimension (from 0 to 50). And the vertical coordinates represent the weight values. Note that an important fact about hidden neurons in neural networks is that they are non-identifiable (any two hidden nodes can be exchanged without affecting the output distribution). This indicates that all dimensions of $\bW_1$ are non-identifiable. Different inference algorithms may converge to different local modes caused by this symmetry. To make the visualizations comparable, we sort the dimensions of $\bW_1$ for each algorithm by the mean value of posterior samples in each dimension.

	From the visualization we could see that BNNs with factorized normal approximation have significant over-pruning problems. A large proportion of hidden nodes are turned off during the inference and the information is mainly carried by several others. These pruned weights can be identified by their low signal-to-noise ratio $\frac{|\mu|}{\sigma}$, where $\mu$ is the mean, and $\sigma$ is the standard deviation. This problem has been pointed out previously, both in VAE works~\citep{sonderby2016ladder, hoffman2017learning} and BNN works~\citep{blundell2015weight}, and can be explained by the looseness of the variational bound when factorized approximation is used~\citep{brian2017over}. In contrast, the ground truth by HMC does not have the pruning problems. And there are very strong correlations captured if we observe the neighboring dimensions.\footnote{Note that in HMC the samples plotted are drawn from a single chain. This is also because the non-identifiability of weights, since different chains with different initializations tend to converge to different local modes that cannot be identified from each other. So if we plotted all the chains, there were too much visual clutter, and to be fair, we would need to run the two VI methods with different initializations.} KIVI also does not have the pruning problems and could capture the strong correlations across the dimensions. And with the gain in accuracy, there is still a good amount of uncertainty in the implicit posterior. Despite the weight dimensions are non-identifiable, we could still see that the two VI methods arrive at biased solutions compared to the ground truth by HMC in terms of scales. Note that this is not necessarily problematic since VI is typically known to produce biased solutions.

	\begin{figure}[h]
		\centering
		\begin{subfigure}[b]{\textwidth}
			\centering
			\includegraphics[width=\textwidth]{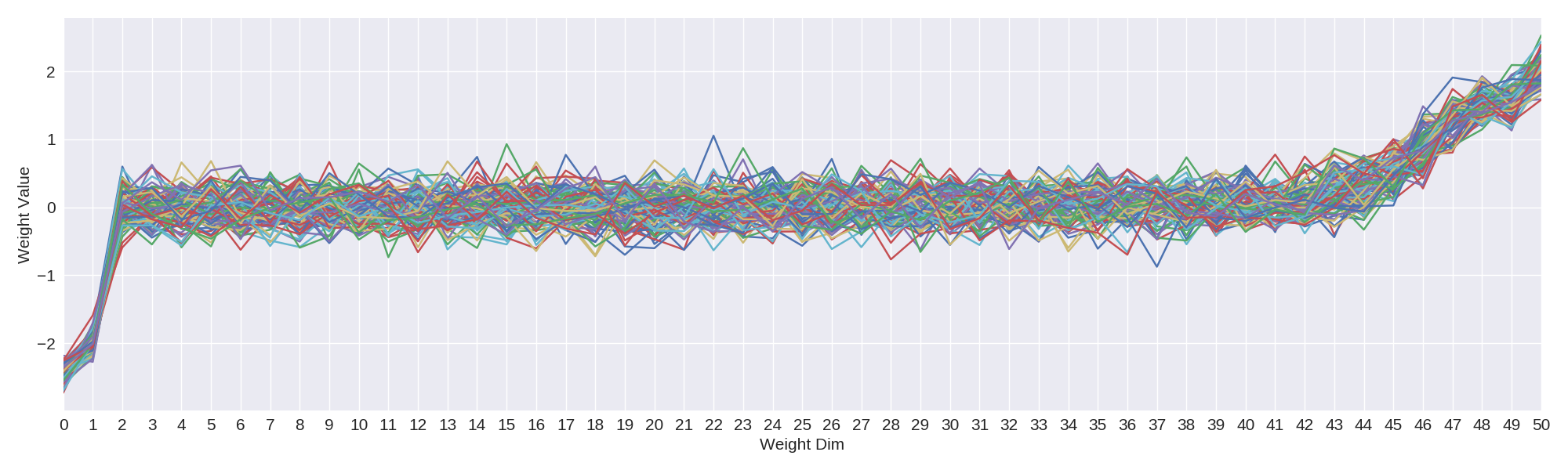}
			\caption{Factorized normal}
		\end{subfigure} \\
		\begin{subfigure}[b]{\textwidth}
			\centering
			\includegraphics[width=\textwidth]{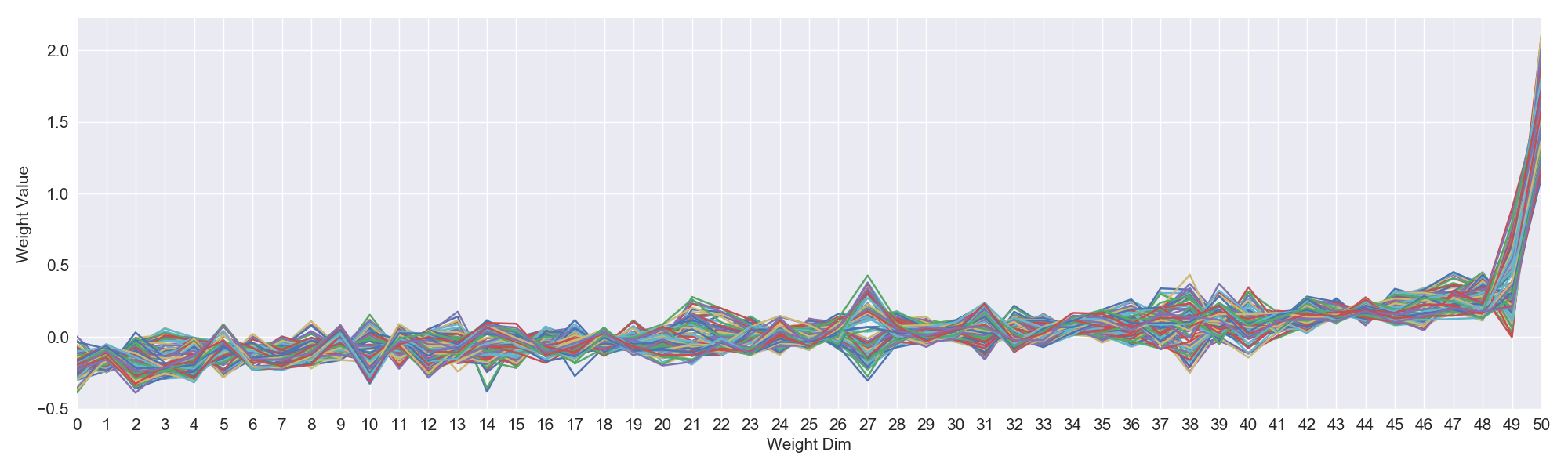}
			\caption{HMC}
		\end{subfigure} \\
		\begin{subfigure}[b]{\textwidth}
			\centering
			\includegraphics[width=\textwidth]{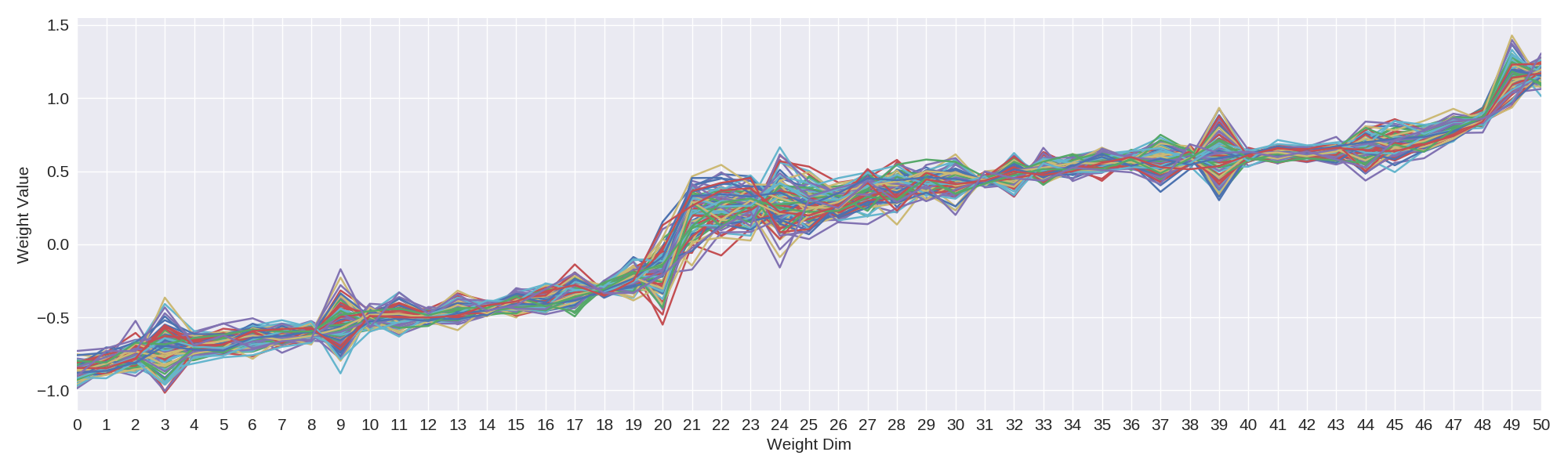}
			\caption{KIVI}
		\end{subfigure}
		\caption{Visualization of learned posteriors for regression on Boston housing.} \label{fig:weight}
	\end{figure}

	\subsection{Accuracy of KL Estimation}
	
	From Section~\ref{sec:est} we know that the approximate gradients of the ELBO are directly related to the KL estimates. So we would like to assess the accuracy of the KL estimator. We adopt the settings from the VAE experiment on MNIST (Section~\ref{sec:vae}). For comparison, we must also be able to compute a ground truth of the KL term. To achieve this, we use normalizing flow in $q(\bz|\bx)$, which has a complicated but tractable density. Thus we can get a good Monte Carlo estimate of the true KL term:
	$
	\mathrm{KL}(q_{\phi}(\bz|\bx)\| p(\bz)) \simeq \frac{1}{m}\sum_{i=1}^m \log \frac{q_{\phi}(\bz_i|\bx)}{p(\bz_i)}, \bz_i \sim q_{\phi}(\bz|\bx).
	$
	In Figure~\ref{fig:kl-ac} we compare the KL term estimated using KIVI with the ground truth. Note that since we use adaptive-contrast (AC) (see Appendix~\ref{app:ac}) in the VAE experiments, where the KL term is broken down into two parts, it should make more sense to look at the only part that uses the density ratio estimator (the KL divergence between the standardized $q$ and a standard normal distribution), which is plotted in Figure~\ref{fig:kl}. We can see that the KL estimates  closely track the ground truth, and are more accurate as the variational approximation improves over time.

	\begin{figure}[h]
		\centering
		\begin{subfigure}[b]{0.45\textwidth}
			\centering
			\includegraphics[height=4cm]{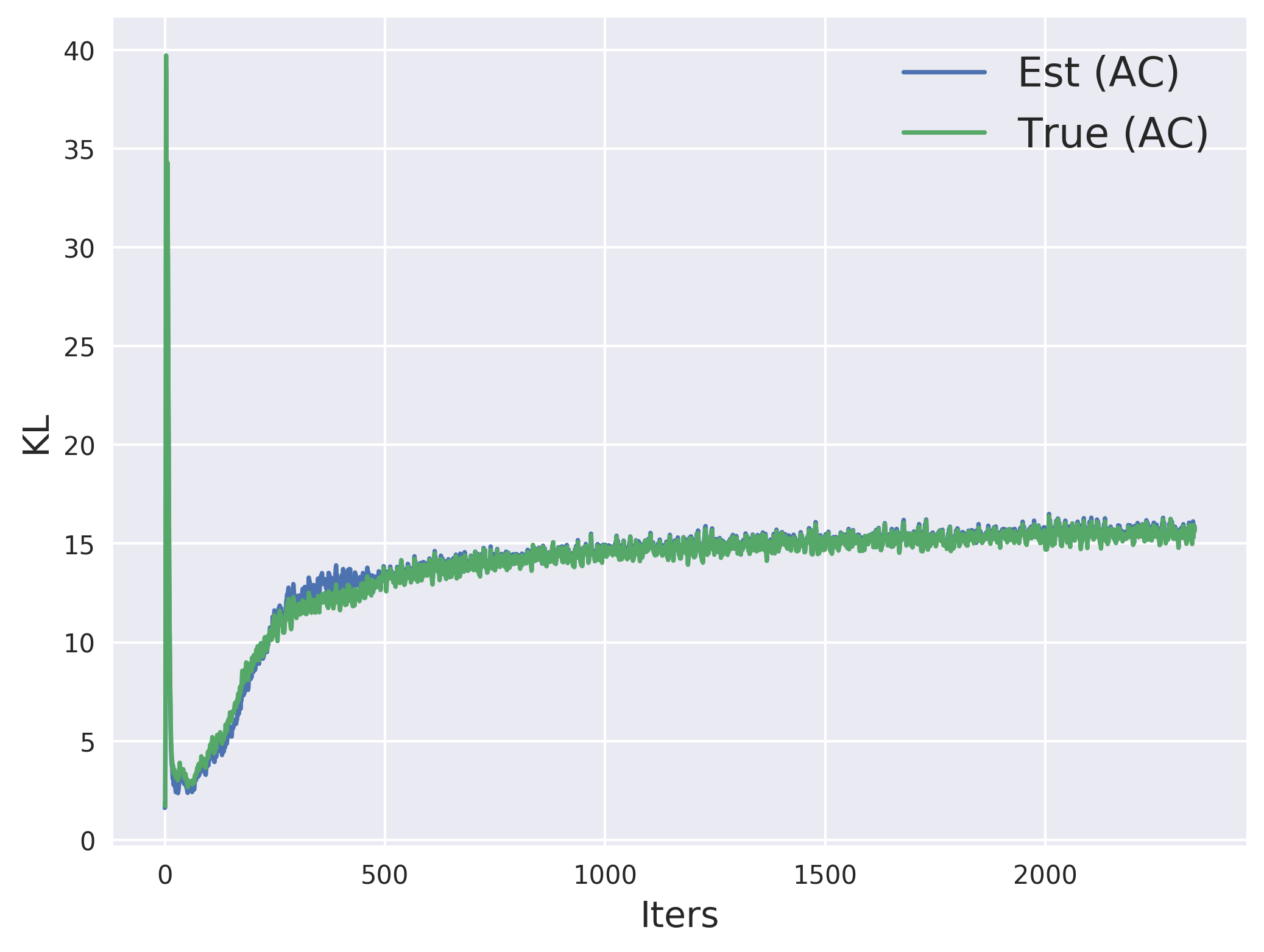}
			\caption{} \label{fig:kl-ac}
		\end{subfigure}
		\begin{subfigure}[b]{0.45\textwidth}
			\centering
			\includegraphics[height=4cm]{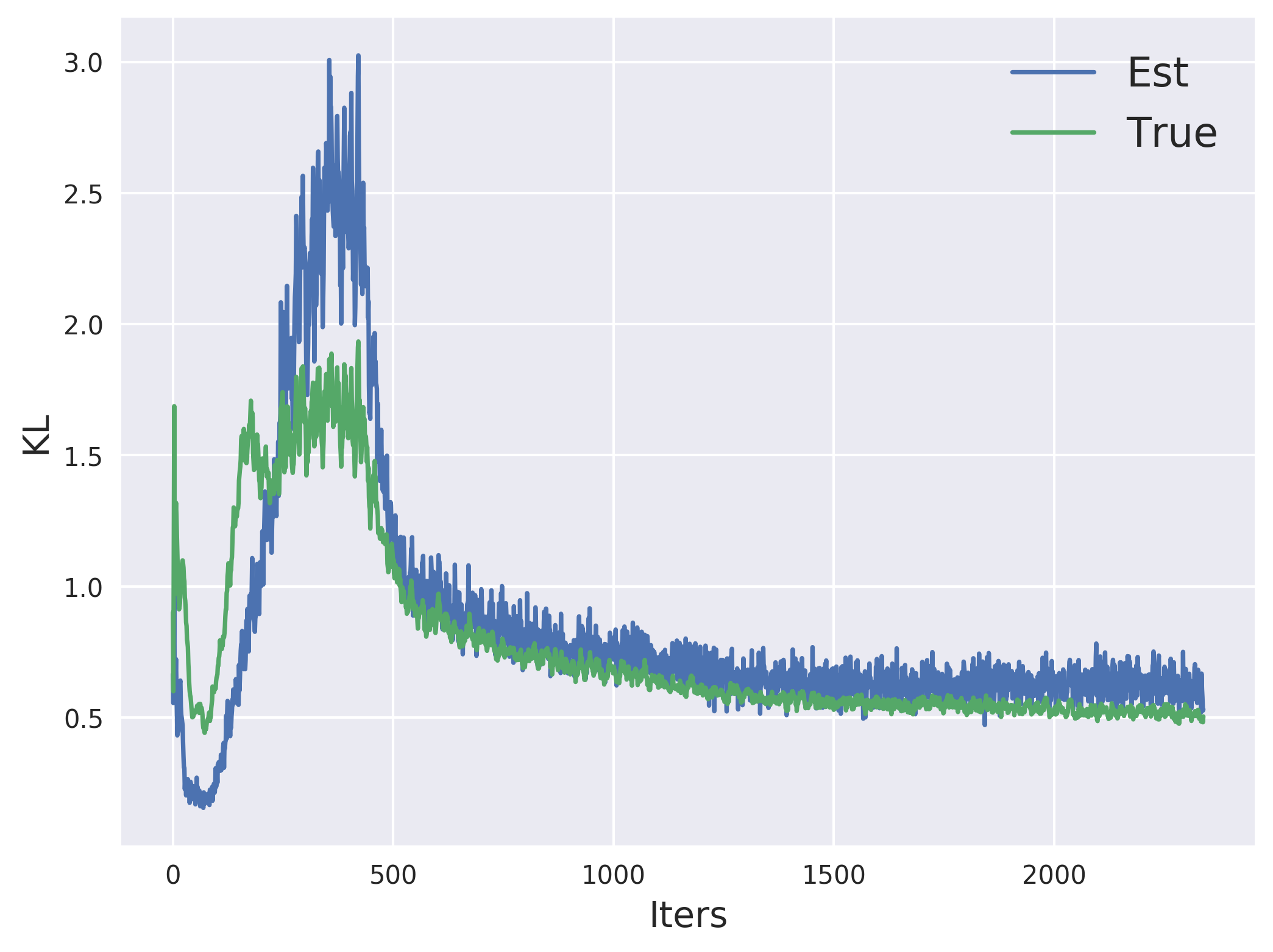}
			\caption{} \label{fig:kl}
		\end{subfigure}
		\caption{True KL term vs. estimated KL term for posteriors with normalizing flows.}
	\end{figure}
	
	\subsection{Change of Interpolation Results on CelebA Through Training} \label{app:interp}
	
	In \Cref{fig:interp-1,fig:interp-5,fig:interp-10,fig:interp-15,fig:interp-20,fig:interp-25} we present the generated images for the interpolation experiments on CelebA through the training process. The images are generated after 1, 5, 10, 15, 20, and 25 epochs. Results on the left are produced by AVB, and on the right by KIVI. It can be clearly seen that AVB's training process is of very high variance, as we have mentioned in Section~\ref{sec:bg}.
	
	\begin{figure}[h!]
		\centering
		\begin{subfigure}[b]{.35\textwidth}
			\centering
			\includegraphics[height=5cm]{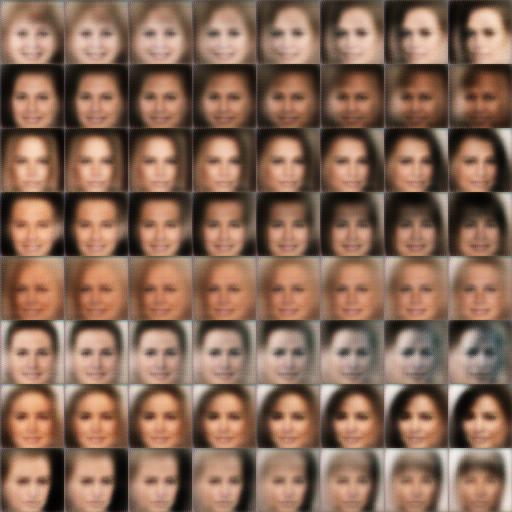}
			\caption{AVB}
		\end{subfigure}
		\hspace*{2.5em}
		\begin{subfigure}[b]{.35\textwidth}
			\centering
			\includegraphics[height=5cm]{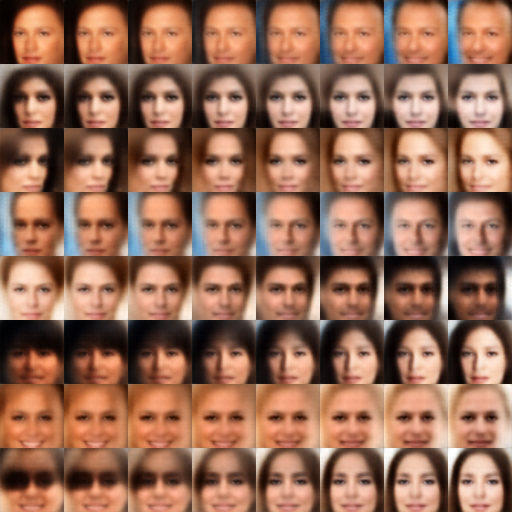}
			\caption{KIVI}
		\end{subfigure}
		\caption{Epoch 1} \label{fig:interp-1}
	\end{figure}
	\begin{figure}[h!]
		\centering
		\begin{subfigure}[b]{.35\textwidth}
			\centering
			\includegraphics[height=5cm]{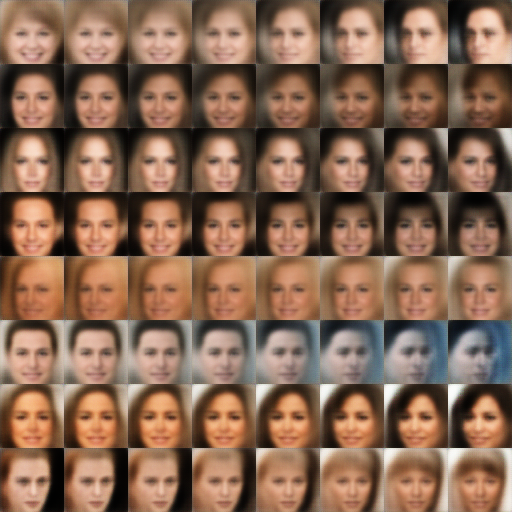}
			\caption{AVB}
		\end{subfigure}
		\hspace*{2.5em}
		\begin{subfigure}[b]{.35\textwidth}
			\centering
			\includegraphics[height=5cm]{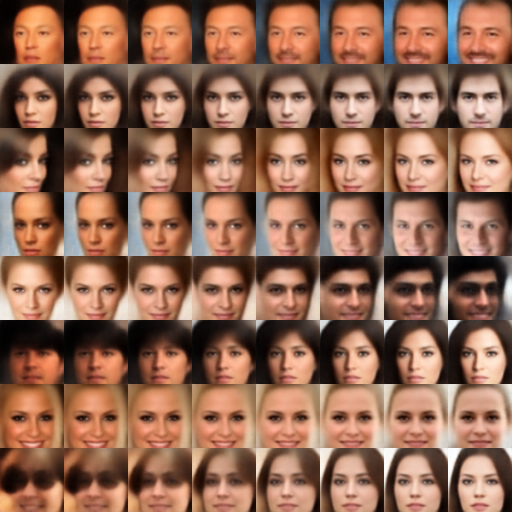}
			\caption{KIVI}
		\end{subfigure}
		\caption{Epoch 5} \label{fig:interp-5}
	\end{figure}
	\begin{figure}[h!]
		\centering
		\begin{subfigure}[b]{.35\textwidth}
			\centering
			\includegraphics[height=5cm]{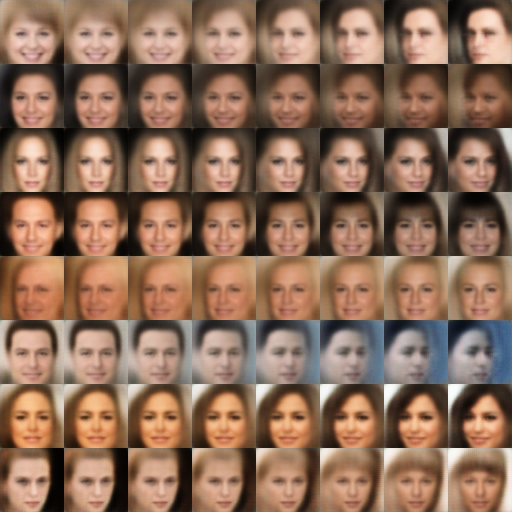}
			\caption{AVB}
		\end{subfigure}
		\hspace*{2.5em}
		\begin{subfigure}[b]{.35\textwidth}
			\centering
			\includegraphics[height=5cm]{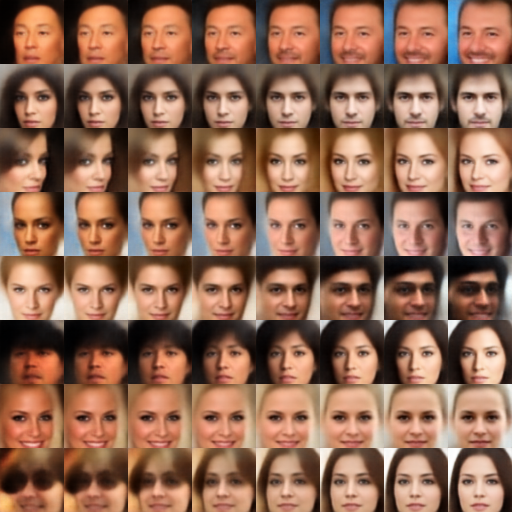}
			\caption{KIVI}
		\end{subfigure}
		\caption{Epoch 10} \label{fig:interp-10}
	\end{figure}
	\begin{figure}[h!]
		\centering
		\begin{subfigure}[b]{.35\textwidth}
			\centering
			\includegraphics[height=5cm]{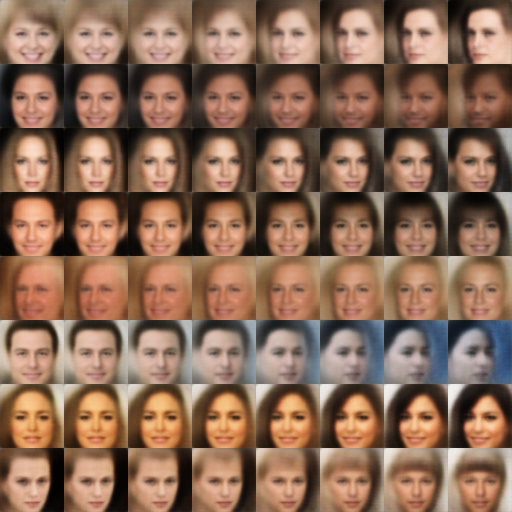}
			\caption{AVB}
		\end{subfigure}
		\hspace*{2.5em}
		\begin{subfigure}[b]{.35\textwidth}
			\centering
			\includegraphics[height=5cm]{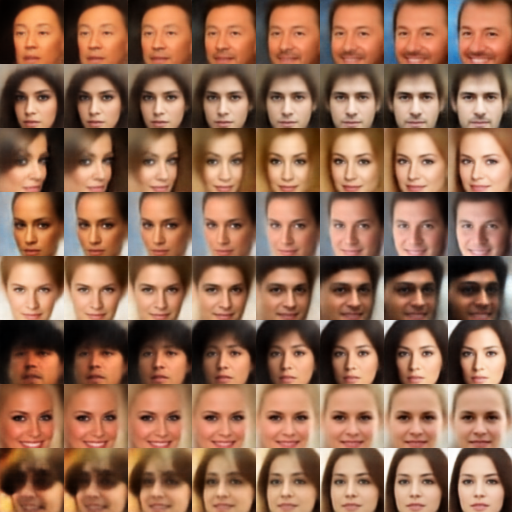}
			\caption{KIVI}
		\end{subfigure}
		\caption{Epoch 15} \label{fig:interp-15}
	\end{figure}
	\begin{figure}[h!]
		\centering
		\begin{subfigure}[b]{.35\textwidth}
			\centering
			\includegraphics[height=5cm]{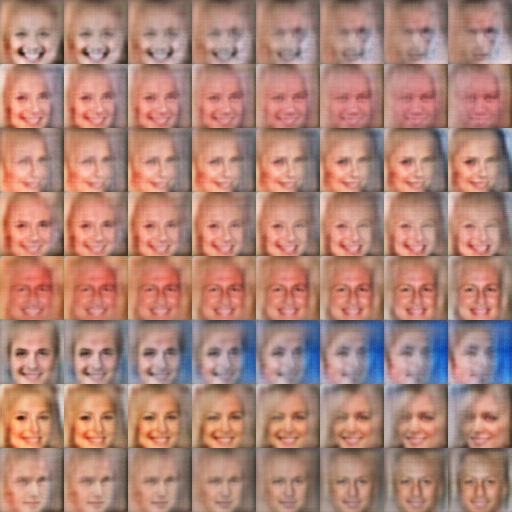}
			\caption{AVB}
		\end{subfigure}
		\hspace*{2.5em}
		\begin{subfigure}[b]{.35\textwidth}
			\centering
			\includegraphics[height=5cm]{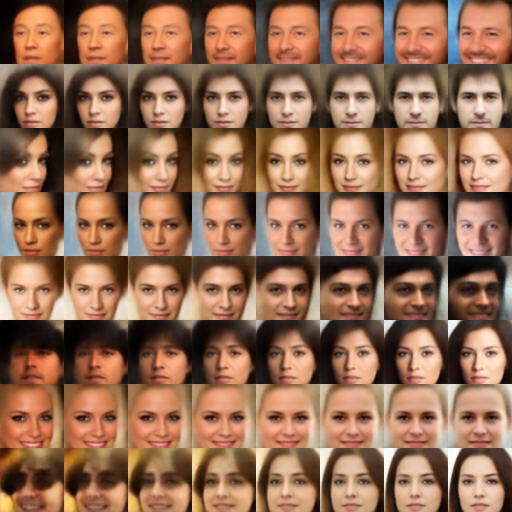}
			\caption{KIVI}
		\end{subfigure}
		\caption{Epoch 20} \label{fig:interp-20}
	\end{figure}
	\begin{figure}[h!]
		\centering
		\begin{subfigure}[b]{.35\textwidth}
			\centering
			\includegraphics[height=5cm]{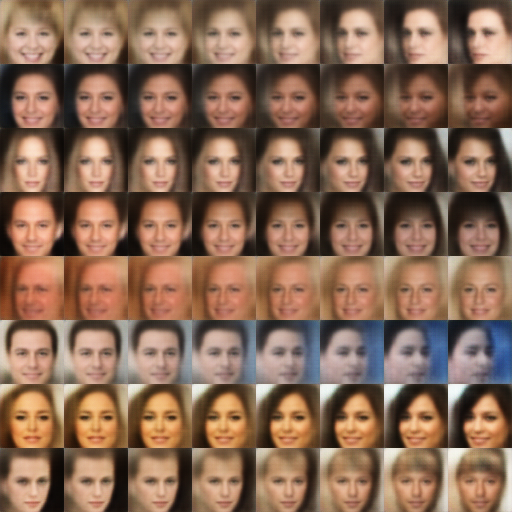}
			\caption{AVB}
		\end{subfigure}
		\hspace*{2.5em}
		\begin{subfigure}[b]{.35\textwidth}
			\centering
			\includegraphics[height=5cm]{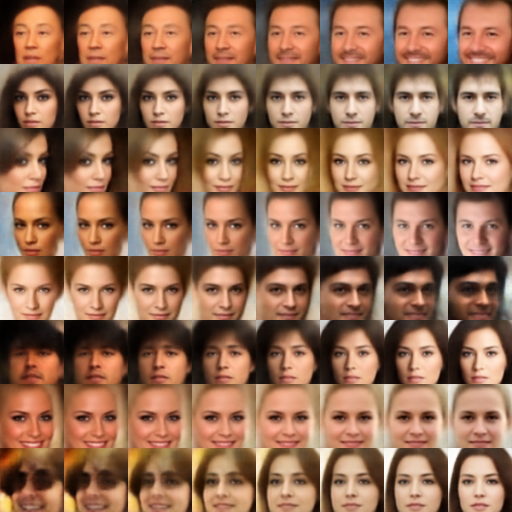}
			\caption{KIVI}
		\end{subfigure}
		\caption{Epoch 25} \label{fig:interp-25}
	\end{figure}
	

	\section{Details of Experiments}
	
	\subsection{Toy Experiments} \label{app:exp-toy}
	
	\textbf{1-D Gaussian Mixture} The implicit posterior generates samples by propagating samples from a standard normal distribution through a two-layer MLP with 10 hidden units and one output unit. We set the regularization coefficient $\lambda$ to $0.003$ and the density ratio clipping threshold to $10^{-8}$.
	
	\textbf{2-D Bayesian Logistic Regression} The inputs $\mathbf{X}$ are 200 points randomly sampled from $\mathrm{U}[-5, 5] \times \mathrm{U}[-5, 5]$. The outputs $\mathbf{Y}$ are the predictions of $\mathbf{X}$ with randomly sampled weights from the prior. For HMC, we run 100 chains, 200 iterations each, and use 10 leapfrog steps. The step size is automatically adapted using dual averaging, starting from 0.001. We discard the first 100 samples generated. For factorized VI, the training is run for 100 epochs and 100 samples are used. In the training, we anneal the learning rate linearly according to $lr=\frac{100}{100 + \textrm{epoch} - 1}$. For KIVI we follow the same setting, except that we use 1k samples. The regularization coefficient $\lambda$ and the minimal density ratio are set to 0.1 and $10^{-8}$, respectively. Below we describe the sample generation process of the implicit variational posterior used in KIVI. First, some 2-D random normal samples are propagated through two fully-connected layers of size 20, producing $\mathbf{h}$ (we do not use activation functions for the second layer), and then $\mathbf{h}$ is added with another random normal noise with trainable variances, producing $\mathbf{z}$. Finally, we propagate $\mathbf{z}$ through a fully-connected layer of size 20 and then a linear output layer, getting the variational samples.
	

	\subsection{Bayesian Neural Networks}
	
	\subsubsection{Regression} \label{app:exp-bnn-reg}
	As the regression datasets have small feature dimensions (all less than 15, except 90 for \textit{Year}), using BNNs of one hidden layer (50 units) does not produce very high-dimensional weights. Therefore, we still use MLPs in the implicit variational posterior, of which the samples are generated by propagating samples from a standard normal distribution through an MLP. For all datasets, we use ReLU as the activation function. The MLP has one hidden layer except that for \textit{Yacht} it has two hidden layers.
	
	We list the details in Table~\ref{tab:post-reg}, which consist of 10 datasets and 2 weight matrices each. Taking Layer 1 for \emph{Boston} as an example, (20, 30, N1) represents that 20 random normal samples are generated and propagated through an MLP with a hidden layer of size 30 and an output layer of size N1$=14\times50$. Note that N1 corresponds to the number of weights in the first layer of the BNN.
	
	\begin{table}[h]
		\begin{adjustwidth}{-.7in}{-.7in}
			\centering
			\caption{Implicit variational posteriors for regression experiments}
			\label{tab:post-reg}
			\begin{tabular}{cccccc}
				& \thead{Boston}  & \thead{Concrete} & \thead{Energy}  & \thead{Kin8nm}     & \thead{Naval}          \\ \midrule
				Layer 1  & (20, 30, N1)   & (30, 50, N1)   & (100, 500, N1) & (100, 500, N1)     & (100, 500, N1) \\ \midrule
				Layer 2 & (20, 30, N2)   & (30, 50, N2)   & (50, 100, N2)  & (50, 100, N2)      & (50, 100, N2)  \\ \midrule
				& \thead{Combined} & \thead{Protein} & \thead{Wine} & \thead{Yacht}        & \thead{Year}   \\ \midrule
				Layer 1  & (100, 500, N1) & (100, 500, N1) & (20, 10, N1) & (100, 800, N1, N1) & (100, 500, N1) \\ \midrule
				Layer 2  & (100, 500, N2) & (100, 500, N2) & (5, 20, N2) & (50, 200, 51, N2)  & (100, 500, N2) \\
			\end{tabular}
		\end{adjustwidth}
	\end{table}

	\subsubsection{Classification} \label{app:exp-bnn-mnist}
	
	MNIST classification needs a larger scale network than the one used in multivariate regression. Therefore, we adopt an MMNN in the implicit variational posterior. We denote the hidden-layer size of the BNN as $L$ (In our experiments $L=400, 800$ or $1200$). Below we set $N=500$ when $L=400$, otherwise we set $N=800$. In the MMNN, we use two matrix multiplication layers.
	
	For the first-layer weights ($785\times L$), the two hidden matrix multiplication layers are both of size $N\times N$ and are with ReLU activations. The output layer of the MMNN is of size $L\times 785$ and is with linear activations. The input matrices are random samples of size $30\times 30$ drawn from a standard normal distribution.
	
	For the second-layer weights ($L\times (L+1)$), the two hidden matrix multiplication layers are both of size $N\times N$ and are with ReLU activations. The output layer of the MMNN is of size $L\times (L+1)$ and is with linear activations. The input matrices are random samples of size $30\times 30$ drawn from a standard normal distribution.
	
	For the third-layer weights ($10\times (L+1)$), the two hidden matrix multiplication layers are both of size $30\times N$ and are with ReLU activations. The output layer of the MMNN is of size $10\times (L+1)$ and is with linear activations. The input matrices are random samples of size $30\times 30$ drawn from a standard normal distribution.
	
	We use the above variational posterior settings in both KIVI and PC. For PC, a logistic regression serves as the discriminator. We set the regularization coefficient $\lambda$ to 0.001 and the minimal density ratio to $10^{-8}$ for KIVI. Both two methods use 10 samples, and we set the batch size to 100 and the learning rate to 0.001 in training.
	
	\subsection{Variational Autoencoders} \label{app:exp-vae}
	
	The decoders used in the MNIST experiment are MLPs with two hidden ReLU layers. The latent dimension is 8. Each hidden layer is of size 500. The implicit posterior is also an MLP with two hidden ReLU layers, with Gaussian noises of 500 dimensions added to the first hidden layer. The noise has zero means and 500-dimensional trainable variances. Training is with the batch size 128. The learning rate is 0.001, which is annealed by a factor of 0.5 every 200 epochs. The parameters for KIVI in this case are $n_p=n_q=100, M=1, \lambda = 0.001$, and the clipping value is set to $10^{-8}$.
	
	The decoders used in the CelebA experiment have exactly the same structure with the one used for $64\times 64$ images in the DCGAN paper~\citep{radford2015unsupervised}. The latent dimension is 32. The implicit variational posterior is a deep convolutional neural network with a symmetric structure to the decoder, except that the output of the last convolutional layer is flattened and is added a Gaussian noise of the same shape as the last dimension. The noise has zero means and trainable variances. The last hidden layer is fully-connected and has 500 ReLU units. For AVB we use the same decoder. For both KIVI and AVB, we use batch size 64. The other training parameters of AVB follow from its original code for CelebA. The parameters for KIVI in this case are $n_p=n_q=100, M=1, \lambda = 0.001$, and the clipping value is set to $10^{-8}$. The learning rate for KIVI is $0.0003$.

	

	

	
	
	
\end{document}